
\documentclass[mlmain]{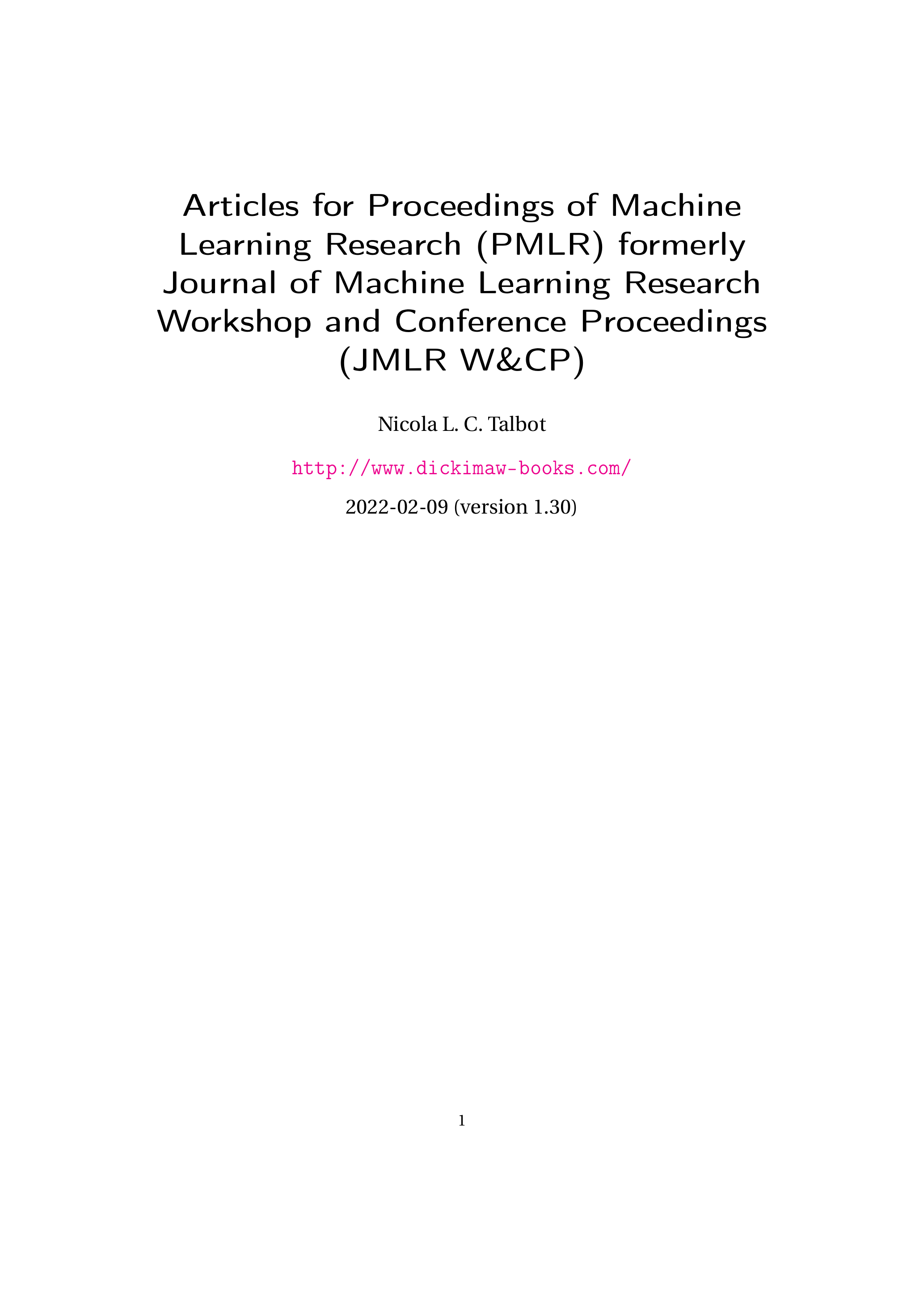}


\usepackage[utf8]{inputenc}

\usepackage{amsmath,graphicx}
\usepackage{amsfonts}
\usepackage{mathtools}


\usepackage[hpos=300px,vpos=70px]{draftwatermark}
\SetWatermarkText{\test}
\SetWatermarkScale{1}
\SetWatermarkAngle{0}




\usepackage{longtable}

\usepackage{booktabs}
\usepackage[load-configurations=version-1]{siunitx} 


\theorembodyfont{\upshape}
\theoremheaderfont{\scshape}
\theorempostheader{:}
\theoremsep{\newline}

\jmlrvolume{}
\firstpageno{1}
\editors{\footnotesize{Sophia Sanborn, Christian Shewmake, Simone Azeglio, Arianna Di Bernardo, Nina Miolane}}

\jmlryear{2022}
\jmlrworkshop{Symmetry and Geometry in Neural Representations}


\title[Moving Frame Net]{Moving Frame Net: SE(3)-Equivariant Network for Volumes}

\author{\Name{Mateus Sangalli} \Email{mateus.sangalli@minesparis.psl.eu}\\
\Name{Samy Blusseau} \Email{samy.blusseau@minesparis.psl.eu}\\
\Name{Santiago Velasco-Forero} \Email{santiago.velasco@minesparis.psl.eu}\\
\Name{Jesús Angulo} \Email{jesus.angulo@minesparis.psl.eu}\\
\addr Mines Paris, PSL University, Centre for Mathematical Morphology, France}

\usepackage{tikz}
\usetikzlibrary{positioning}
\usetikzlibrary{decorations.pathreplacing}
\usetikzlibrary{backgrounds}
\usetikzlibrary{3d}
\usepackage{pgfplots}
\pgfplotsset{compat=newest}

\def\t{{\mathbf t}}
\def\x{{\mathbf x}}
\def\z{{\mathbf z}}
\def\bfv{{\mathbf v}}

\def\y{{\mathbf y}}
\def\z{{\mathbf z}}

\newcommand{\R}{\mathbb{R}}

\newcommand{\N}{\mathbb{N}}
\newcommand{\manifold}{\mathcal{M}}
\newcommand{\liegroup}{\mathcal{G}}
\newcommand{\orbit}{\mathcal{O}}
\newcommand{\njet}[3]{\mathcal{J}^{(#2)}_{#3} #1}
\newcommand{\nchannels}{q}


\begin{document}

\maketitle

\begin{abstract}
Equivariance of neural networks to transformations helps to improve their performance and reduce generalization error in computer vision tasks, as they apply to datasets presenting symmetries (e.g. scalings, rotations, translations). 
The method of moving frames is classical for deriving operators invariant to the action of a Lie group in a manifold.
Recently, a rotation and translation equivariant neural network for image data was proposed based on the moving frames approach. In this paper we significantly improve that approach by reducing the computation of moving frames to only one, at the input stage, instead of repeated computations at each layer. The equivariance of the resulting architecture is proved theoretically and we build a rotation and translation equivariant neural network to process volumes, i.e. signals on the 3D space. 
Our trained model overperforms the benchmarks in the medical volume classification of most of the tested datasets from MedMNIST3D.
\end{abstract}
\begin{keywords}
Lie groups, Group equivariance, 3D image classification, Moving frames 
\end{keywords}

\section{Introduction}
\label{sec:intro}
There is currently a great interest in building machine learning methods that respect symmetries, such as translation, rotation and other physical gauge symmetries. Convolutional neural networks (CNN) are translation equivariant neural networks that have shown great success in a wide variety of tasks related to image processing and understanding. 
Recent work has shown that designing group-equivariant CNNs that exploit additional symmetries via group convolutions has even further
increased their performance \citep{cohen2016group,worrall2017harmonic,weiler2018steerable,cohen2018general,bogatskiy2020lorentz}.

The method of moving frames \citep{cartan1935methode,fels1999moving}, initially proposed by Élie Cartan to produce differential invariants, was recently applied to define SE(2)-equivariant convolutional neural networks, i.e. CNNs equivariant to rotations and translations in 2D, by \cite{sangalli2022differential}. The network constructed with differential invariants provides an alternative to group convolution when constructing group equivariant networks.
In this work, we build on the recent work of \cite{sangalli2022differential} to develop a SE(3)-equivariant neural network. 
We propose a new approach that uses differential invariants on 3D CNNs by computing a moving frame from the input of a neural network and applying it to Gaussian $n$-jets in order to obtain equivariant architectures for SE(3). This technique
is leveraged to propose a novel CNN architecture that we call \emph{SE(3)-Moving Frame Network} (SE3MovFNet)  \footnote{its implementation can be found at \url{https://github.com/mateussangalli/MovingFrameNetwork}}. The new architecture can be applied to volumetric data.
Empirically, we show that SE3MovFNet improves the performance of competitive CNNs on a collection of datasets for 3D medical image classification problems.

The paper is organized as follows: We discuss related work in the literature in Section \ref{sec:related} and we introduce concepts of group equivariance, as well as some basics on the method of moving frames, in Section \ref{sec:technical}. In Section \ref{sec:se3_moving_frames} we derive a moving frame in SE(3) and show how one can obtain an equivariant neural network architecture from a moving frame and in Section \ref{sec:movfnet} we introduce the SE3MovFNet based on the moving frames method
In Section \ref{sec:experiments} we validate de SE3MovFNet in a task of medical volume classification and overperform most of the benchmarks.
We end the paper with concluding remarks in Section \ref{sec:conc}.

\section{Related Work}
\label{sec:related}
In the literature of group-equivariant networks there exist many approaches to plane rotation-equivariant networks, for example, \cite{cohen2016group,worrall2017harmonic,weiler2018steerable}. Also on 2D rotation-equivariant networks, some approaches are based on differential operators \citep{shen2020pdo,jenner2021steerable,sangalli2022differential}. In particular, the current approach is an extension of the moving frames-based SE(2)-equivariant neural network in \cite{sangalli2022differential}.
In the domain of 3D CNNs, two of the possible data representations are point clouds and volumetric data. Many approaches that seek equivariance to space rotations are for CNNs that process point cloud data \citep{thomas2018tensor,chen2021equivariant,melnyk2021embed,thomas2020rotation}.

Our work focuses on defining SE(3)-equivariant networks for data based on voxels, i.e., volumetric data. Some other approaches that aim to achieve this result are: \cite{worrall2018cubenet} achieves equivariance  to a discrete subgroup of $\text{SO}(3)$; \cite{weiler2018steerable} uses a steerable filter basis based on spherical harmonics to learn general $\text{SE}(3)$-equivariant filters and \cite{shen2022pdo} does the same thing using filters based on partial differential operators. Our approach uses differential operators like \cite{shen2022pdo} but instead of using a steerable filter basis we apply a moving frame to invariantize the network, which consists of evaluating each neighborhood with a rotation computed at the first layer.

\section{Technical Background}
\label{sec:technical}
In this section we introduce both the concepts of group action and equivariance and the basic concepts behind the method of moving frames.
The final goal of the paper is to propose a class of group-equivariant networks based on the method of moving frames. 


\subsection{Group Actions and Equivariance}
Given a group $\liegroup$ and a set $\manifold$, a \emph{group action}\footnote{Here we deal only with \emph{left group actions} but \emph{right group actions} can have analogous results.} of $\liegroup$ on $\manifold$ is a map $\pi: \liegroup \times \manifold \to \manifold$ such that $\forall \x \in \manifold$ $\pi(e, \x) = \x$ where $e$ is the neutral element of $\liegroup$ and  $\forall g, h \in \liegroup$, $\forall \x \in \manifold$ $\pi(g, \pi(h, \x)) = \pi(g \cdot h, \x)$. We denote for all $g \in \liegroup$, $\x \in \manifold$, $g \cdot \x \coloneqq \pi(g, \x)$. 
If $\liegroup$ is a Lie group, $\manifold$ a smooth manifold and $\pi$ is a smooth map, then $\pi$ is a Lie group action. See Appendix \ref{apd:group_actions_props} for properties of group actions.
In this paper we have an implicit assumption of locality of the group action. A \emph{local Lie group action} is a smooth map $\pi: U \to \manifold$ where $U \subseteq \liegroup \times \manifold$ is an open set such that $\{e\} \times \manifold \subseteq U$, satisfying $\forall \x \in \manifold,$ $\pi(e, \x) = \x$ and $\forall g, h \in \liegroup, \x \in \manifold$ s.t. $\{(h, \x), (g, \pi(h, \x)), (g \cdot h, \x)\} \subseteq U$, we have  $\pi(g, \pi(h, \x)) = \pi(g \cdot h, \x)$. We use the same notation as group actions for local group actions.

Given sets $\manifold$ and $\mathcal{N}$ acted upon by $\liegroup$ an operator $\phi:\manifold \to \mathcal{N}$ is \emph{equivariant} if $\forall g \in \liegroup, x \in \manifold$, $\phi(g \cdot \x) = g \cdot \phi(\x)$. We assume that the action on $\manifold$ is not the identity to avoid trivial cases. \emph{Invariance} is a special case of equivariance where the action on $\mathcal{N}$ is the identity, i.e. $\phi(g \cdot \x) = \phi(\x)$.

Given manifolds $X$ and $Y$, when $\liegroup$ acts on $X$ with actions we define the action on the space of smooth functions $C^\infty(X, Y)$ as, for all $f \in C^\infty(X, Y)$, $\x \in X$, $g \in \liegroup$ 
\begin{equation}
    \label{eq:action_functions}
    (g \cdot f)(\x) \coloneqq f(g^{-1} \cdot \x).
\end{equation} 
As this paper is focused on exploring equivariant networks on signals, this is the type of action we seek equivariance to. Since we are interested in rotations in the input domain we do not consider actions that change the output of the function $f$, but the general method presented in this paper is capable of dealing with that.

\subsection{The Method of Moving Frames}

\textbf{Moving Frames.} 
Let $\manifold$ be an $m$-dimensional smooth manifold and $\liegroup$ be an $r$-dimensional Lie group that acts on $\manifold$.
A \emph{moving frame} \citep{fels1999moving} is a $\liegroup$-equivariant map $\rho: \manifold \to \liegroup$ which in particular satisfies, $\forall \z \in \manifold$, $g \in \liegroup$
\begin{equation}
    \rho(g \cdot \z) = \rho(\z) \cdot g^{-1}.
\end{equation}
A moving frame $\rho$ induces the function $\z \mapsto \rho(\z)\cdot\z$ which is constant over each orbit $\mathcal{O}_{\z} = \left\lbrace g \cdot \z , g\in\liegroup\right\rbrace$. Namely, $\forall \z \in \manifold$, $g \in \liegroup$, $\rho(g\cdot\z)\cdot g\cdot\z = \rho(\z)\cdot\z$.\\
\textbf{Invariantization.} 
The main interest of having a moving frame from the perspective of equivariant deep learning is the \emph{invariantization} it defines. Given an operator $F:\manifold \to \mathcal{N}$, its invariantization is defined as $\forall \z \in \manifold, \; $ $\imath[F](\z) \coloneqq F(\rho(\z) \cdot \z)$. The invariantization of an operator is invariant with respect to the group action as $\imath[F](g\cdot \z) = \imath[F](\z)$ for every $\z\in\manifold, g\in\liegroup$. Applying the invariantization to an invariant operator returns the same operator, therefore the set of invariant operators is the set of invariantized operators.

In our case, objects of interest (volumes, images, etc) are functions $f:X \to Y$ where $X = \R^p$ and $Y = \R^q$, $p, q\in\N^*$. They can be modeled as submanifolds of the manifold $\manifold = X \times Y$ by identifying them by their graph where each point has coordinates $(\x, u) = (\x, f(\x))$. In that case, if we can decompose the action of $\liegroup$ into an action on $X$ and an action on $Y$, then we can associate each invariant operator on $\manifold$ to an equivariant one on the space of functions $f:X \to Y$ (see appendix \ref{apd:inv2equiv}). 
We use this framework in this paper.\\
\textbf{Cross-Section.}
A \emph{cross-section} to the group orbits is a submanifold $K \subseteq \manifold$ of dimension complementary to the group dimension i.e. $\dim K = m - r$ that intersects each orbit transversally\footnote{The tangent spaces of $K$ and of the orbit $\orbit_z$ span the tangent space of $\manifold$ at the intersection $K \cap \orbit_z$.}. If the intersection happens at most once it is a regular cross-section. 
If $\liegroup$ acts freely and regularly on $\manifold$ and given a regular cross-section $K$ to the group orbits, then for each $z\in\manifold$ there is a unique element $g_z\in\liegroup$ such that $g_z \cdot \z \in K$. The function $\rho: \manifold \to \liegroup$ mapping each $z$ to $g_z$ is a moving frame \citep{fels1999moving,olver2007generating}.\\
\textbf{Jet-Bundle.}
The $n$-th order jet bundle \citep{olver1993applications}, or \emph{jet-space}, $J^n (\manifold)$ is an extension of a manifold $\manifold$ given by equivalence classes of functions. For us the jet bundle is particularly useful when the group action is not free on $\manifold$, as prolonging the manifold to a sufficiently high-order jet bundle and extending the group action to this space can result in a free action, enabling the definition of a moving frame.

In this section we define the jet-space for spaces of the form $\manifold = X \times Y$ where $X = \R^p$, $Y = \R$.
Given a multi-index $I = (i_1, \cdots, i_p) \in \N^p$, we will note $|I| = \sum_{k=1}^p i_k$ its modulus, and let us denote by $\mathcal{I}_n^p = \lbrace I \in \N^p, |I|\leq n\rbrace$ the set of multi-indices in $\N^p$ of modulus at most $n\in \N$. For $\x\in X$ and $n\in\N$, we introduce $\njet{f}{n}{\x}$ the operator mapping $C^\infty(X, Y)$ to $Y_p^{(n)} = \R^{\binom{n + p}{n}}$, and defined for any $f\in C^\infty(X, Y)$ by $\mathcal{J}_{\x}^{(n)}(f) = \big(\partial^I f(\x)\big)_{I \in \mathcal{I}_n^p} = \big(\partial^{i_1}\partial^{i_2}\dots\partial^{i_p} f(\x)\big)_{(i_1,\dots,i_p) \in \mathcal{I}_n^p}$. Then for any $u^{(n)} = \big(u_I\big)_{I\in\mathcal{I}_n^p}\in Y_p^{(n)}$ and any $\x\in X$, $(\mathcal{J}_{\x}^{(n)})^{-1}(u^{(n)})$ is an equivalence class for the equivalence relation $f_1\sim f_2 \iff \mathcal{J}_{\x}^{(n)}(f_1) = \mathcal{J}_{\x}^{(n)}(f_2).$ This class is represented in particular by the polynomial function defined for any $\t\in X$ by 
\begin{equation}
    \label{eq:pol_u}
    u(\t) = \sum_{I = (i_1, \dots, i_p)\in \mathcal{I}_n^p} \frac{u_I}{I!} (t_1 - x_1)^{i_1} \dots (t_p - x_p)^{i_p},
\end{equation}
with $I! = i_1! i_2! \dots i_p!$. It is the Taylor polynomial of order $n$ at $\x$ of any function of the class. The $n$th-order jet space of $\manifold$, noted $J^n(\manifold)$, is the union of all such equivalence classes, and can therefore be indentified to $X \times Y^{(n)}_p$. According to the above, for an element $(\x, u^{(n)}) = (\x, (u_I)_{I\in\mathcal{I}_n^p})$, each $u_I$ is also a partial derivative of $u$ evaluated in $\x$, namely  $u_I = \partial^I u(\x)$. For example, if $p=3$ and $\x = (x,y,z)$, $u_{(0,0,0)} = u(\x)$, $u_{(1,0,0)} = \frac{\partial u}{\partial x}(\x) = u_x(\x)$,  $u_{(1,1,0)} = \frac{\partial^2 u}{\partial x \partial y} (\x) = u_{xy}(\x)$ and so on. In practice we will often use these partial derivative notations to identify elements of the jet space, and omit the variable as it is explicit from the first component. For example in the case $p = 3$, $n = 2$, an element $\z \in \manifold = J^0(\manifold)$ is identified by $\z = (\x, u) = (x, y, z, u)$ and an element $\z^{(2)} \in J^2(\manifold)$ by $\z^{(2)} = (\x, u^{(2)}) = (x, y, z, u, u_x, u_y, u_z, u_{xx}, u_{xy}, u_{yy}, u_{xz}, u_{yz}, u_{zz})$.\\
\textbf{Prolongation of the Group Action.} 
Because smooth functions at a point $\x$ can be associated to an element of the jet-space and vice-versa, it makes sense that the actions on functions induces an action in the jet-space. This action is computed by associating an $n$-jet to a function and computing its derivatives at the transformed point $g \cdot \x$. 
Formally, given a point $(\x, u^{(n)}) \in J^n(\manifold)$, let $u \in C^\infty(X, Y)$ be a function such that $u^{(n)} = \njet{u}{n}{\x}$, without loss of generality we can choose $u$ to be the polynomial \eqref{eq:pol_u}. We define the \emph{prolongation} of the action of $\liegroup$ on $\manifold$ to the jet-space $J^n(\manifold)$, given by, for $g \in \liegroup$ and $\z^{(n)} = (\x, u^{(n)}) \in J^n(\manifold)$ 
\begin{equation}
\label{eq:prolonged_action}
    g \cdot \z^{(n)} = g \cdot (\x, u^{(n)}) \coloneqq (g \cdot \x, \njet{(g \cdot u)}{n}{g \cdot \x}).
\end{equation}
The expression \eqref{eq:prolonged_action} is well defined as it can be verified that it does not depend on the choice of $u$ \cite{olver1993applications} and is a group action. The intuition behind evaluating the derivatives at the point $g \cdot \x$, is that the value in $u$ at $\x$ is the same as the value of $g \cdot u$ at $g \cdot \x$, i.e. $(g \cdot u)(g \cdot \x) = u(g^{-1} \cdot g \cdot \x) = u(\x)$, however its $n$-jet is not the same (see Figure \ref{fig:example_prolonged_action}).

\textbf{Fundamental invariants.}
Invariantizations of operators in the jet-space are referred to as differential invariants and the invariants\footnote{Here we abuse notation and denote $\imath[x_i]$ as the invariantization of the projection $(\x, u^{(n)}) \mapsto x_i$ and we denote $\imath[u_I]$ the invariantization of the projection $(\x, u^{(n)}) \mapsto u_I$.} $\imath[x_i]$ $\imath[u_I]$, $1 \leq 1 \leq p$, $0 \leq |I| \leq n$ are called \emph{fundamental invariants} because every differential invariant of order $n$ can be expressed as a functional combination $F(\imath[x_1], \dots, \imath[x_p], (\imath[u_I])_{0 \leq |I| \leq n})$ \citep{olver2007generating}. Conversely every function of the fundamental invariants is a differential invariant.

\section{Moving Frame and Differential Invariants for SE(3) on Volumes}
\label{sec:se3_moving_frames}
The group SE(3) of rotations and translations in dimension three, is the semi-direct product of Lie groups SO(3) (rotations) and $\R^3$ (translations) and since both are $3$-dimensional, then SE(3) is a $6$-dimensional Lie group. 
Here a \emph{volume} refers to a signal on a 3-dimensional Euclidean domain, i.e. functions of the type $f:\R^3 \to \R^q$.

We derive $\text{SE}(3)$-equivariant operators on volumes using the method of moving frames.
Volumes are represented as submanifolds of $\manifold = \R^3 \times \R$. We consider the case $q = 1$, but keeping in mind that for higher dimensions it is just a matter of channel-wise application. 
SE(3) acts on $\manifold$ by rotating and translating the spatial coordinates $\x$, i.e.
\begin{equation}
    \label{eq:se3_act_manifold}
    \forall (R, \bfv) \in SE(3), \forall (\x, u) \in \manifold, \quad \pi_{R, \bfv}(\x, u) = (R \cdot \x + \bfv, u),
\end{equation}

If we proceed to extend $\manifold$ to the first-order jet space we will find that SE(3) does not act freely on $J^1(\manifold)$. Indeed, the orbit of a point $(\x, u, u_x, u_y, u_z)$ is the Cartesian product of $\R^3$, $\{u\}$ and a sphere with radius $\sqrt{u_x^2 + u_y^2 + u_z^2}$, hence it has dimension $5 \neq \dim \text{SE}(3) = 6$.
Therefore it is necessary to prolong the action to the second order jet-space in order to be able to obtain a moving frame. In this section we use a matrix notation for compactness: we denote $\nabla u = [u_x, u_y, u_z]^T$ and 
\begin{equation}
    Hu = \begin{bmatrix} u_{xx} & u_{xy} & u_{xz} \\ u_{xy} & u_{yy} & u_{yz} \\ u_{xz} & u_{yz} & u_{zz} \end{bmatrix}.
\end{equation}
In that way, the coordinates of the second order jet-space are identified by $\z^{(2)} = (\x, u^{(2)}) = (\x, u, \nabla u, Hu)$.

\begin{figure}[t]
    \centering
    \includegraphics[width=.75\textwidth]{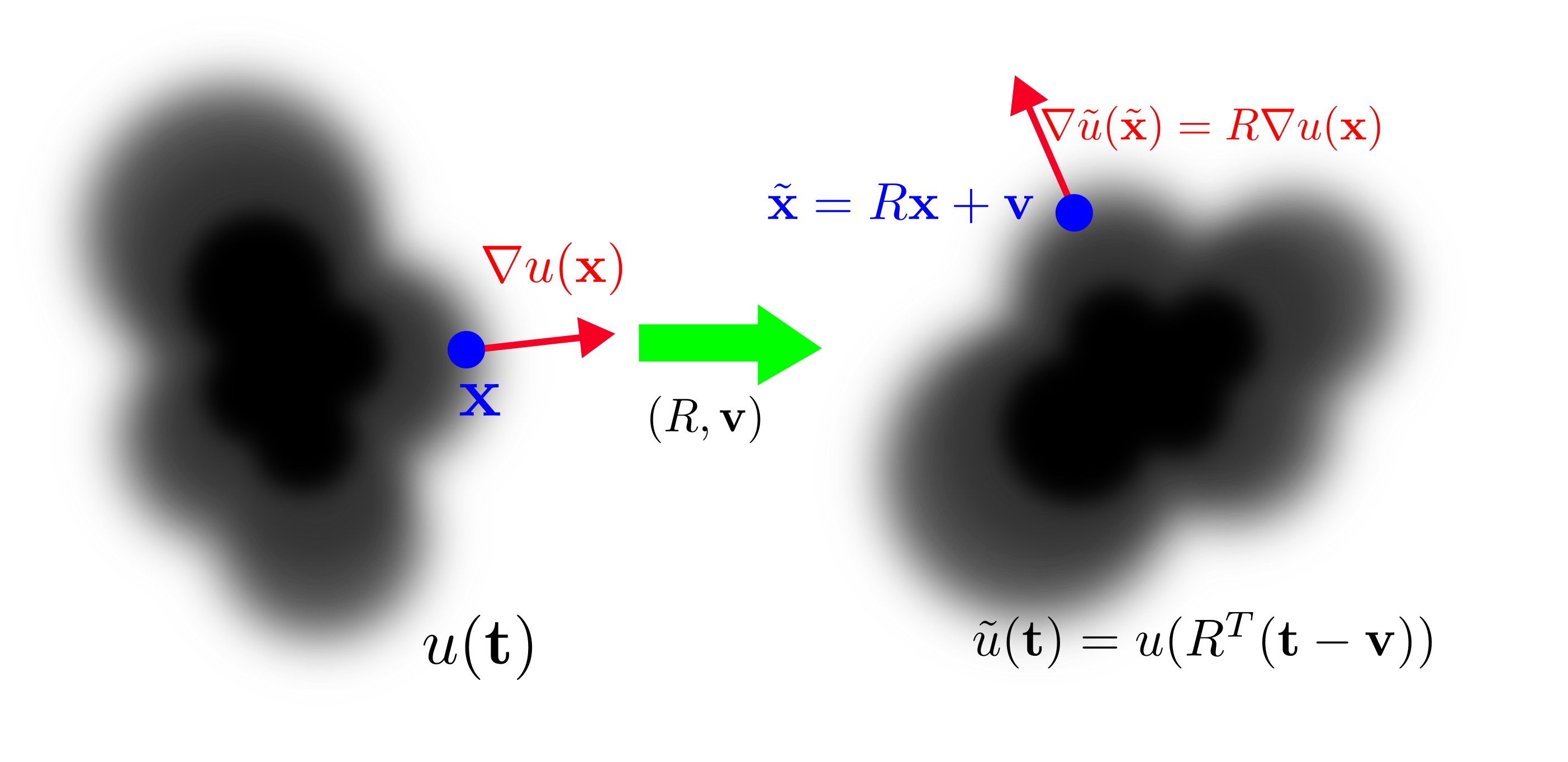}
    \caption{Example of the first order prolonged action of SE(2). The gradient and transformed gradients evaluated at $\x$ and $\tilde{\x} = (R, \bfv) \cdot \x$ are related by the rotation $R$, thus the first order action is also a rotation by $R$. In the second order case, the same effect is observed on the eigenvalues of the Hessian matrix.}
    \label{fig:example_prolonged_action}
\end{figure}

We compute the prolonged action following \eqref{eq:prolonged_action}. Choosing some $u \in C^\infty(X, Y)$ such that $u^{(n)} = \njet{u}{n}{\x}$ the action becomes $(R, \bfv) \cdot (\x, u^{(n)}) = (R\x + \bfv, \njet{((R, \bfv) \cdot u)}{n}{R \x + \bfv})$.
In order to compute the second order jet-space, we compute the gradient and Hessian matrix of the function $\tilde{u}(\t) = ((R, \bfv) \cdot u)(\t)$ at the point $\tilde{\x} = R \x + \bfv$. From \eqref{eq:action_functions} we have $\tilde{u}(\t) = u((R, \bfv)^{-1} \cdot \t) = u(R^T (\t - \bfv))$, thus applying the chain rule we have $\nabla \tilde{u} (\t) = R \nabla u (R^T (\t - \bfv))$ and substituting $\t$ by $\tilde{\x}$ we have $\nabla \tilde{u} (\tilde{\x}) = R \nabla u(\x)$. An example illustrating this in the two-dimensional case is shown in Figure \ref{fig:example_prolonged_action}. A similar argument can be applied to show that $H \tilde{u} (\tilde{\x}) = R H u (\x) R^T$.

\begin{equation}
    \label{eq:prolonged_action_se3}
    \pi_{R, \bfv}(\z^{(2)}) = \pi_{R, \bfv}(\x, u, \nabla u, Hu) = (R \x + \bfv, u, R \nabla u, R Hu R^T).
\end{equation}
 A similar reasoning can be applied to describe the coordinates of the higher order jet-spaces as symmetric tensors and obtain the prolongation of the group action of higher order using tensor contraction. This action decomposes into an action on $X$ and an action on $Y^{(n)}$. 

Now we can find a cross-section that will give us a moving frame. Because the cross-section has to have a dimension complementary to the group, we use six equations to construct it. We also add some inequalities to ensure regularity. We propose $K = \{\x = 0, u_{xy} = u_{xz} = u_{yz} = 0, u_{xx} \geq u_{yy} \geq u_{zz}, u_x > 0, u_y > 0\}$ i.e. $\x = 0$ and $Hu$ is diagonal with its diagonal sorted in non-increasing order. $Hu$ is symmetric so we can find a $P \in SO(3)$\footnote{if $\det P = -1$ we can multiply one of its rows by $-1$ so that the new matrix has determinant $1$.} such that $PHuP^T$ is diagonal, resulting in the moving frame $\rho:(\x, u) \in\manifold \mapsto (P, -P\x) \in\liegroup$.

\subsection{Equivariant Network from a Fixed Moving Frame}
\label{subsec:fixed_movfr}
From the prolonged group action \eqref{eq:prolonged_action_se3} we can deduce that the non-trivial fundamental invariants of the jet-space of order two are $u$, $v_i^T \cdot \nabla u$ and $\lambda_i$, $i = 1,2,3$, where the $v_i$s are the eigenvectors of $Hu$ (columns of $P$) and the $\lambda_i$s are the eigenvalues of $Hu$ (diagonal coefficients of $PHuP^T$). 

Following the approach of \cite{sangalli2022differential}, a two-layer SE(3)-equivariant neural network can be obtained using invariants of order two as follows:
\begin{enumerate}
    \item Let the volume $f \in C^\infty(X, Y) = C^\infty(\R^3, \R)$ be the input to the network.
    First we compute for all $\x$, $\njet{f}{2}{\x} = (f(\x), \nabla f(\x), H f(\x))$, followed by the computation of the fundamental invariants of order $2$, which we will denote  $\imath[\z^0_\x] = \rho(\x, \njet{f}{2}{\x}) \cdot (\x, \njet{f}{2}{\x}) = \rho(\z^0_\x)\cdot \z^0_\x$ where $\z^0_\x = (\x, \njet{f}{2}{\x})$.
    Let $\psi_1:J^n(\manifold) \to Y_1$, where $Y_1 = \R^{q}$, be a smooth map.
    In a deep learning context we assume $\psi_1$ to be a multilayer perceptron (MLP).
    The first layer $\phi_1:C^\infty(\R^3, Y) \to C^\infty(\R^3, Y_1)$, $Y_1 = \R^{q}$, is given by
    \begin{equation}
        \phi_1[f](\x) = \psi_1(\imath[\z^0_\x]) = \psi_1(\rho(\z^0_\x) \cdot \z^0_\x) = \imath[\psi_1](\z^0_\x)
    \end{equation}
    and it is SE(3)-equivariant, because $\imath[\psi_1]$ is an invariant applied as in Appendix \ref{apd:inv2equiv}.
    
    \item We build the second layer analogously. The output of the first layer is a signal $f_1 = (f^1_1, f^2_1, \dots, f^q_1) = \phi_1(f):X \to Y_1$. We compute the derivatives $\njet{f}{2}{1, \x} = (\njet{(f_1^1)}{2}{\x}, \dots, \njet{(f_1^q)}{2}{\x})$ and the fundamental invariants $(\imath[\z^{1,1}_\x], \cdots, \imath[\z^{1,q}_\x])$
    where $\z^{1,j}_\x = (\x, \njet{(f_1^j)}{2}{\x})$ and $\imath[\z^{1,j}_\x] = \rho(\z^{1,j}_\x)\cdot \z^{1,j}_\x$. Let $\psi_2: (J^n(\manifold))^q \to Y_2$ be a function given by an MLP. The second layer can be computed from the output of the first by
    \begin{equation}
        \label{eq:example_layer2}
        \phi_2'[f_1](\x) = \psi_2 \Big(\rho(\z^{1,1}_\x)\cdot \z^{1,1}_\x, \cdots, \rho(z^{1,q}_\x)\cdot z^{1,q}_\x\Big).
    \end{equation}
    Again this function is equivariant because it is a function of invariants. The second layer  $\phi_2 = \phi_2' \circ \phi_1$ is equivariant because it is a composition of equivariant operators.
    This process can be repeated to obtain $L$ equivariant layers $\phi_l = \phi_l' \circ \phi_{l-1}$, $0 \leq l \leq L$.
\end{enumerate}

Using the cross-section $K$, the approach described above requires the computation of the gradient of eigenvectors and eigenvalues with respect to the matrix entries, and even when using a closed polynomial expression to write these values, it can be quite challenging numerically. With both the numerical or closed form expression of the eigenvectors, the training of the networks resulted in exploding gradients in our early experimentation. We propose a new solution which limits the computations to only one moving frame.

The alternative we propose is the following. 
Instead of computing the differential invariants at each layer, involving the computation of the moving frame based on the previous layer's feature maps, we compute the moving frame based only on the network input signal and compute all subsequent layers based on this moving frame.

Computing a two-layer network as in the previous example, $f_1$ is obtained exactly as in step 1.
Now from $f_1$ we compute $\njet{f}{2}{1, \x}$ for all $\x$. Given some $\psi_2: (J^n(\manifold))^q \to \R^{q'}$ (which again should be regarded as an MLP) we can obtain the output of the second layer. In contrast to \eqref{eq:example_layer2}, however, we transform $\rho(\z^{1,j}_\x)$ according to $\rho(\z^0_\x) = \rho(\x, \njet{f}{2}{\x})$, not to itself obtaining 
\begin{equation}
    \label{eq:layer2_se3movf}
    \phi_2'(f_1)(\x) = \psi_2(\rho(\z^0_\x) \cdot \z^{1,1}_\x, \dots, \rho(\z^0_\x) \cdot \z^{1,q}_\x).
\end{equation}
The next result shows that repeated application of \eqref{eq:layer2_se3movf} defines a SE(3)-equivariant network.
%

\begin{proposition}
    \label{prop:equivariantization}
    Let $X = \R^3$, $Y = \R^{q_0} = \R$ and $Y_l = \R^{q_l}$ for $1 \leq l \leq L$, assume that SE(3) acts on $X \times Y_l$ like \eqref{eq:se3_act_manifold}. Let $\rho: X \times Y^{(n)} \to \text{SE}(3)$ be a moving frame. Let smooth maps $\psi_l \in C^\infty(J^n(\manifold)^{q_{l-1}}, Y_l)$ for $1 \leq l \leq L$.
    The functions $\phi_l: C^\infty(X, Y) \to C^\infty(X, Y_l)$, $1 \leq l \leq L$ defined by, for all $f \in C^\infty(X, Y)$, $\x \in X$, denoting $\z^0_\x = (\x, \njet{f}{n}{\x})$ and $\z^l_\x = (\x, \njet{\phi_l(f)_j}{n}{\x})_{0 \leq j \leq q_l}$, $1 \leq l \leq L$,
    \begin{equation}
        \label{eq:phi_1}
        \phi_1[f](\x) = \psi_1 \left(\rho \big(\z^0_\x \big) \cdot  \z^0_\x  \right)
    \end{equation}
    and, for $1 < l \leq L$ either
    \begin{equation}
        \label{eq:prop_direct_layer}
        \phi_l[f](\x) = \psi_l \left( \rho \big( \z^0_\x \big) \cdot \z^{l-1}_\x \right)
    \end{equation}
    or (assuming $q_l = q_{l-1}$)
    \begin{equation}
        \label{eq:prop_residual_layer}
        \phi_l[f](\x) = \phi_{l-1}[f](\x) + \psi_l \left( \rho \big( \z^0_\x \big) \cdot \z^{l-1}_\x  \right)
    \end{equation}
    are SE(3)-equivariant for all $1 \leq l \leq L$.
    Where the of $\liegroup$ on $J^n(\manifold)^{q_l}$, $1 \leq l \leq L$, is the action on $J^n(\manifold)$ applied coordinate-wise.
\end{proposition}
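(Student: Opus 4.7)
I would prove the proposition by induction on $l$, establishing at each layer the stronger statement $\phi_l[g \cdot f] = g \cdot \phi_l[f]$ for every $g \in \text{SE}(3)$ and every smooth $f$. Two ingredients drive the whole argument: (i) the defining property of a moving frame, $\rho(g \cdot \z) = \rho(\z) \cdot g^{-1}$, and (ii) the identity
\begin{equation}
    \bigl(\x,\, \njet{(g\cdot f)}{n}{\x}\bigr) \;=\; g \cdot \bigl(g^{-1}\cdot\x,\, \njet{f}{n}{g^{-1}\cdot\x}\bigr),
    \label{eq:proof_plan_jet_transfer}
\end{equation}
which is nothing but the definition \eqref{eq:prolonged_action} of the prolonged action evaluated at the point $g^{-1}\cdot\x$.

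For the base case $l = 1$, let $\tilde{\z}^0_\x = (\x, \njet{(g\cdot f)}{n}{\x})$ denote the input jet for the transformed signal. By \eqref{eq:proof_plan_jet_transfer}, $\tilde{\z}^0_\x = g \cdot \z^0_{g^{-1}\cdot\x}$, so (i) yields $\rho(\tilde{\z}^0_\x) = \rho(\z^0_{g^{-1}\cdot\x}) \cdot g^{-1}$. Using that the prolonged action is a genuine group action,
\begin{equation}
    \rho(\tilde{\z}^0_\x) \cdot \tilde{\z}^0_\x
    \;=\;\bigl(\rho(\z^0_{g^{-1}\cdot\x}) \cdot g^{-1}\bigr)\cdot \bigl(g \cdot \z^0_{g^{-1}\cdot\x}\bigr)
    \;=\;\rho(\z^0_{g^{-1}\cdot\x}) \cdot \z^0_{g^{-1}\cdot\x}.
\end{equation}
Inserting this in \eqref{eq:phi_1} and recalling \eqref{eq:action_functions} gives $\phi_1[g\cdot f](\x) = \phi_1[f](g^{-1}\cdot\x) = (g\cdot\phi_1[f])(\x)$.

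For the inductive step, assume $\phi_{l-1}[g\cdot f] = g\cdot\phi_{l-1}[f]$. Applying \eqref{eq:proof_plan_jet_transfer} channel-wise, which is legitimate because the action \eqref{eq:se3_act_manifold} is trivial on the $Y$-component and therefore commutes with stacking channels, one obtains $\tilde{\z}^{l-1}_\x = g \cdot \z^{l-1}_{g^{-1}\cdot\x}$ where the action on $J^n(\manifold)^{q_{l-1}}$ is the coordinate-wise one. The moving frame is still computed from the input jet, so as before $\rho(\tilde{\z}^0_\x) = \rho(\z^0_{g^{-1}\cdot\x}) \cdot g^{-1}$, and the identical cancellation gives $\rho(\tilde{\z}^0_\x) \cdot \tilde{\z}^{l-1}_\x = \rho(\z^0_{g^{-1}\cdot\x}) \cdot \z^{l-1}_{g^{-1}\cdot\x}$. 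Feeding this into $\psi_l$ proves equivariance of the direct layer \eqref{eq:prop_direct_layer}. For the residual form \eqref{eq:prop_residual_layer} the skip term $\phi_{l-1}[g\cdot f](\x)$ transforms correctly by the inductive hypothesis, and equivariance is preserved under pointwise addition.

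The main obstacle is really a clean statement of \eqref{eq:proof_plan_jet_transfer}: once one sees that the prolonged action encodes exactly how the jet at $\x$ of a transformed function relates to the jet at the pre-image $g^{-1}\cdot\x$ of the original function, the rest is bookkeeping in which the factor $g^{-1}$ produced by the moving frame property and the factor $g$ carried by the prolonged jet annihilate. The architecture is built precisely so that this cancellation occurs at every depth because $\rho$ is always evaluated at the input jet, which itself transforms equivariantly.
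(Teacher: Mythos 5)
Your proof is correct and follows essentially the same route as the paper's: induction on $l$, the jet-transfer identity $(\x,\njet{(g\cdot f)}{n}{\x}) = g\cdot(g^{-1}\cdot\x,\njet{f}{n}{g^{-1}\cdot\x})$ (the paper's substitution $\y=g^{-1}\cdot\x$), and cancellation of $g$ via the moving-frame property $\rho(g\cdot\z)=\rho(\z)\cdot g^{-1}$, with the residual case handled by the inductive hypothesis on the skip term. The only cosmetic difference is that you spell out the cancellation in the base case where the paper invokes invariance of $\z\mapsto\rho(\z)\cdot\z$ directly.
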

\begin{proof}
    See Appendix \ref{apd:proof_equivariantization}.
\end{proof}


Because the moving frame is fixed we do not have to compute the gradients of an eigen decomposition, as the moving frame can be seen as an input to the network we only have to compute the gradients of an eigen decomposition once, and the moving frame can be seen as an input to the network. Indeed the expressions $\rho(\z^0_\x) \cdot \njet{\phi_l[f]}{n}{\x}$ are linear with respect to the values of of the $l$-th layer $\phi_l[f]$.

\section{Moving Frame Nets for SE(3) Acting on Volumes}
\label{sec:movfnet}
\subsection{Gaussian Derivatives}
In order to compute the differential invariants we use Gaussian derivatives. Gaussian derivatives have already been used in neural networks to produce structured receptive fields \citep{jacobsen2016structured,penaud2022fully,sangalli2022differential} in CNNs.
Gaussian derivatives are used to compute the derivatives of a Gaussian filtered volume $f:\Omega \to \R$ defined on a grid $\Omega \subseteq \R^3$: 
\begin{equation}
    \frac{\partial^{i+j+k}}{\partial x^i \partial y^j \partial z^k} (f * G_\sigma) = f * \frac{\partial^{i+j+k}}{\partial x^i \partial y^j \partial z^k} G_\sigma = f * G^{i,j,k}_\sigma.
\end{equation}

We refer to the Gaussian $n$-jet of a volume as the Gaussian derivatives of order $\leq n$ $f^{(n)}_\sigma \coloneqq (f * G^{i,j,k}_\sigma)_{0 \leq i+j+k \leq n}$. We can also identify the Gaussian $n$-jet by tensor coordinates. In particular for $n=2$ we write $f^{(n)}_\sigma = (f_\sigma, \nabla_\sigma f, H_\sigma f)$ where $f_\sigma = f * G_\sigma$, $\nabla_\sigma$ is the Gaussian gradient i.e. Gaussian derivatives of order one and $H_\sigma f$ is the Gaussian Hessian, i.e. Gaussian derivatives of order two. Given a orthogonal matrix for each point $P:\Omega \to SO(3)$ (e.g. the matrices defining a moving frame) we denote the local prolonged action by $(P \cdot f_\sigma^{(n)})(\x)$, e.g. for $n=2$ $(P \cdot f_\sigma^{(n)})(\x) = (f_\sigma(\x), P(\x) \nabla_\sigma f (\x), P(\x) H_\sigma f (\x) P(\x)^T)$. If $f:\Omega \to \R^q$ is a multi-channel volume we can apply these operations channel-wise.

Gaussian filters are already rotation-equivariant, so their composition with a differential invariant yields a rotation-equivariant operator. Moreover, they avoid the issues inherent with discrete signals and reduce the negative impact sampling signals. These properties motivate the use of Gaussian $n$-jets to compute the invariants.

\subsection{Architecture}
Based on the exposition on Section \ref{subsec:fixed_movfr}, the general idea of our $\text{SE}(3)$-equivariant architecture, given an input signal $f:\Omega \to \R$ where $\Omega \subseteq \R^3$ is a three-dimensional grid, is to first compute the matrices $P:\Omega \to SO(3)$ of the moving frame diagonalizing $H_{\sigma'} f(\x)$ for every $\x$, i.e., find $P(\x)$ such that $P(\x) H_{\sigma'} f(\x) P(\x)^T$. Even if all eigenvalues are different there are at least two choices of normalized eigenvectors (i.e. columns of $P(\x)$) corresponding to each eigenvalue, therefore to remove ambiguity an keep a consistent moving frame, we choose the option that has smallest angle with the gradient, and if the gradient norm is too small we multiply that column by zero.

After computing $P$ we compute blocks as shown in Figure \ref{fig:block}, which we call SE3MovF blocks, using the moving frame and the current features maps as input. 
The scale of each layer does not need to be necessarily the same, here we consider that a scale $\sigma'$ is used to compute the moving frames and a scale $\sigma$ to compute the derivatives at each block.

A simple form of a global architecture, which we call SE3MovFNet, is in Figure \ref{fig:arch}. The feature maps of each block are summed like in residual networks, which mimics a PDE scheme \citep{ruthotto2020deep}. The network in Figure \ref{fig:architecture_se3movfr} is specialized for a fixed number of channels, but by applying an $1 \times 1 \times 1$ convolution between blocks we can increase the number of feature maps of the next layer. Pooling may also be performed by subsampling after a block. The global max-pooling at the end renders the equivariant architecture invariant \citep{bronstein2021geometric}, which is interesting for a classification problem.

%

\begin{figure}[htbp]
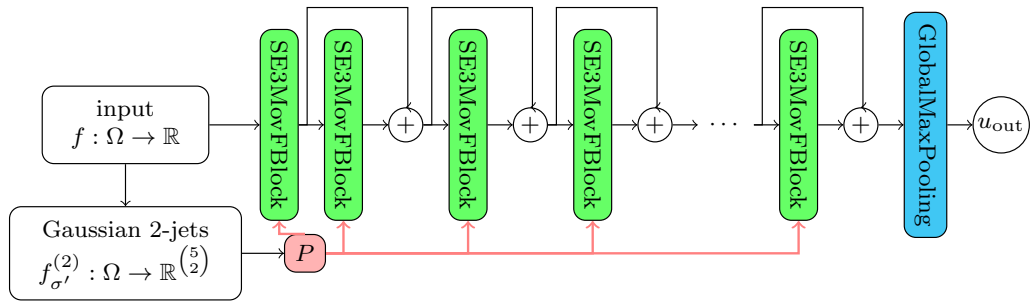
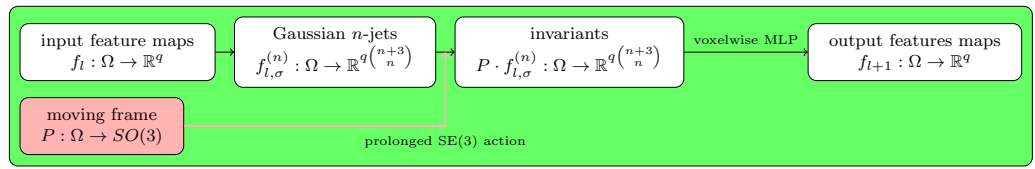

\floatconts
  {fig:architecture_se3movfr}
  {\caption{An example of a SE3MovFNet architecture for classification, along with its fundamental build block the SE(3) MovF block. The block $P$ denotes the computation of the moving frame matrices $P:\Omega \to SO(3)$.}}
  {%
    \subfigure[Architecture of a SE3MovFNet]{
        \label{fig:arch}
        \includeteximage[width=.9\textwidth]{architecture_se3movfr.tex}}
    \subfigure[SE3MovF block]{\label{fig:block}
        \includeteximage[width=.9\textwidth]{se3movfr_block.tex}}
  }
\end{figure}

\section{Experiments}
\label{sec:experiments}
\textbf{MedMNIST.}
MedMNIST \citep{medmnistv1,medmnistv2} is a collection of datasets for benchmarking algorithms in medical image processings classification tasks.
It contains six datasets of $28 \times 28 \times 28$ volumes: AdrenalMNIST3D, NoduleMNIST3D, VesselMNIST3D, SynapseMNIST3D, OrganMNIST3D, FractureMNIST3D. For more information see \cite{medmnistv2}.

For each dataset we train a network with five SE3MovFr blocks with $16, 16, 32, 32, 64$ filters, using a stride of two in the second block. Voxelwise MLPs are computed as two subsequent $1 \times 1 \times 1$ convolutions followed by batch normalization (both) and leaky ReLU (only the first) and with the same number of neurons.
We also train a CNN baseline with the same number of filters where each block consists of two $3 \times 3 \times 3$ convolutions followed by batch normalization and leaky ReLU. Input volumes are resized to $29 \times 29 \times 29$ so that subsampling by a factor of two is equivariant by rotations of $90^\circ$ around the coordinate-axes. Overall results can be seen in Table \ref{tab:acc_medmnist}. There we can see that the SE3MovFNet surpassed most of the benchmarks. Results of testing the models on rotated test sets are seen in Figure \ref{fig:acc_fracturemnist_rotated} and Appendix \ref{apd:results_rot}. In those results we observe it has perfect invariance for $90^\circ$ rotations, evidenced by the periodicity of results, and a generally better equivariance than the CNN baseline with or without augmentation. It suffers, however, a significant loss for orientations not multiple of $90^\circ$.

\begin{figure}[htbp]
\centering 
\includeteximage[width=.98\textwidth]{legend.tex}

\floatconts
  {fig:acc_fracturemnist_rotated}
  {\caption{Predictions on the FractureMNIST3D on the test set rotated by different angles around each of the coordinate axis.}}
  {%
    \subfigure[Z-axis]{
        \label{fig:fracxy}
        \includegraphics[width=.3\textwidth]{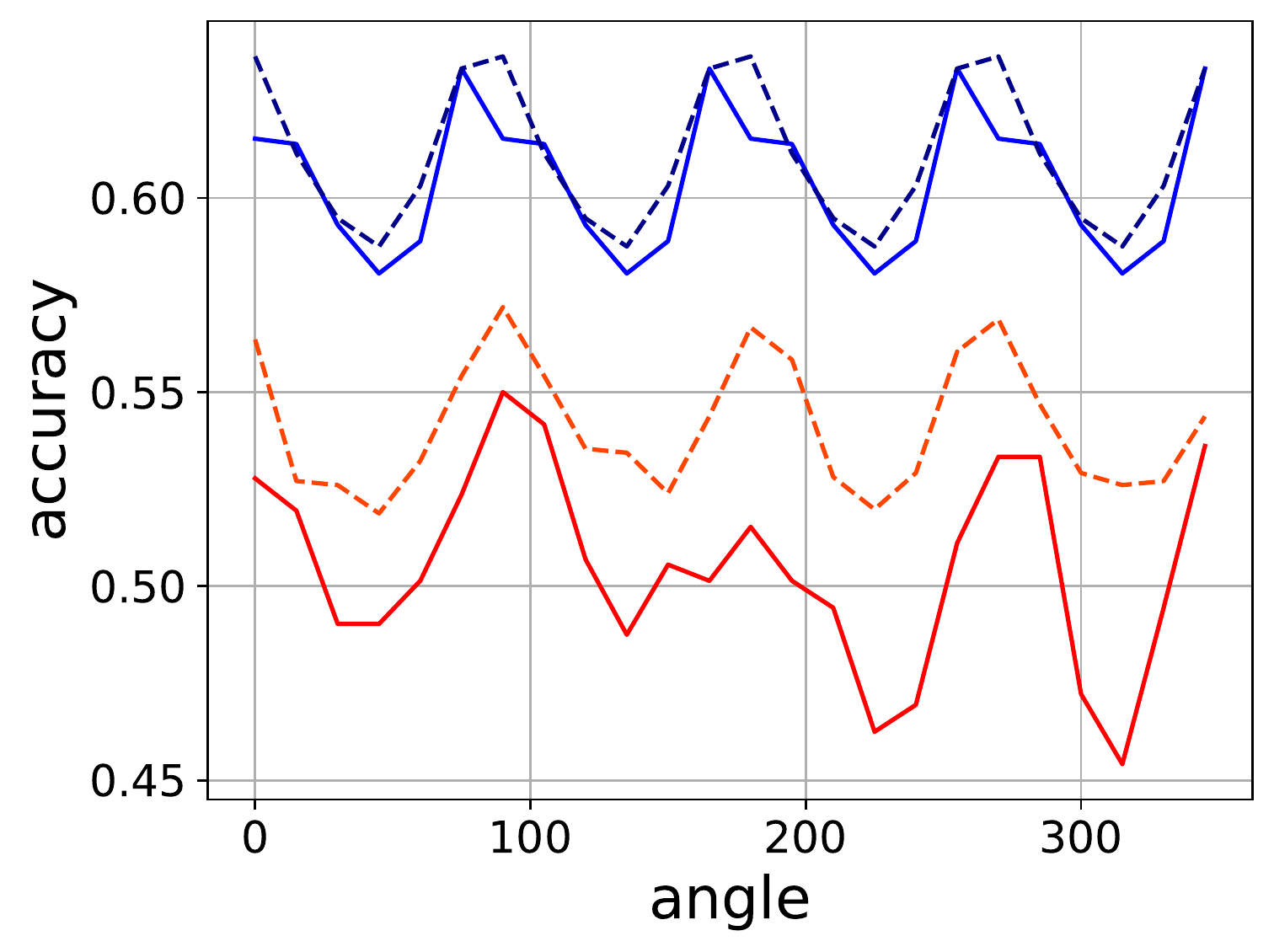}
    }
    \subfigure[Y-axis]{
        \label{fig:fraczx}
        \includegraphics[width=.3\textwidth]{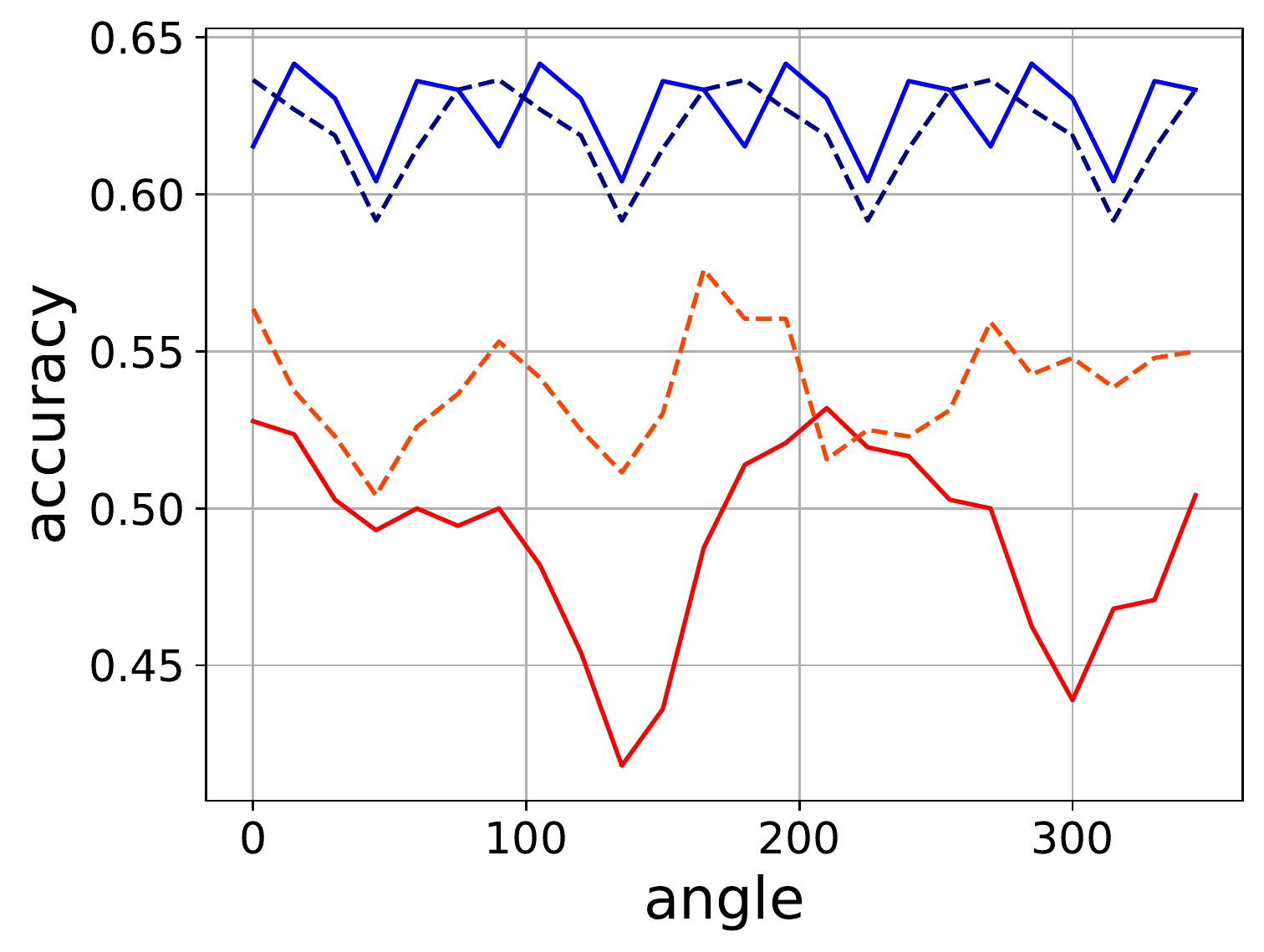}
    }
    \subfigure[X-axis]{
        \label{fig:fracyz}
        \includegraphics[width=.3\textwidth]{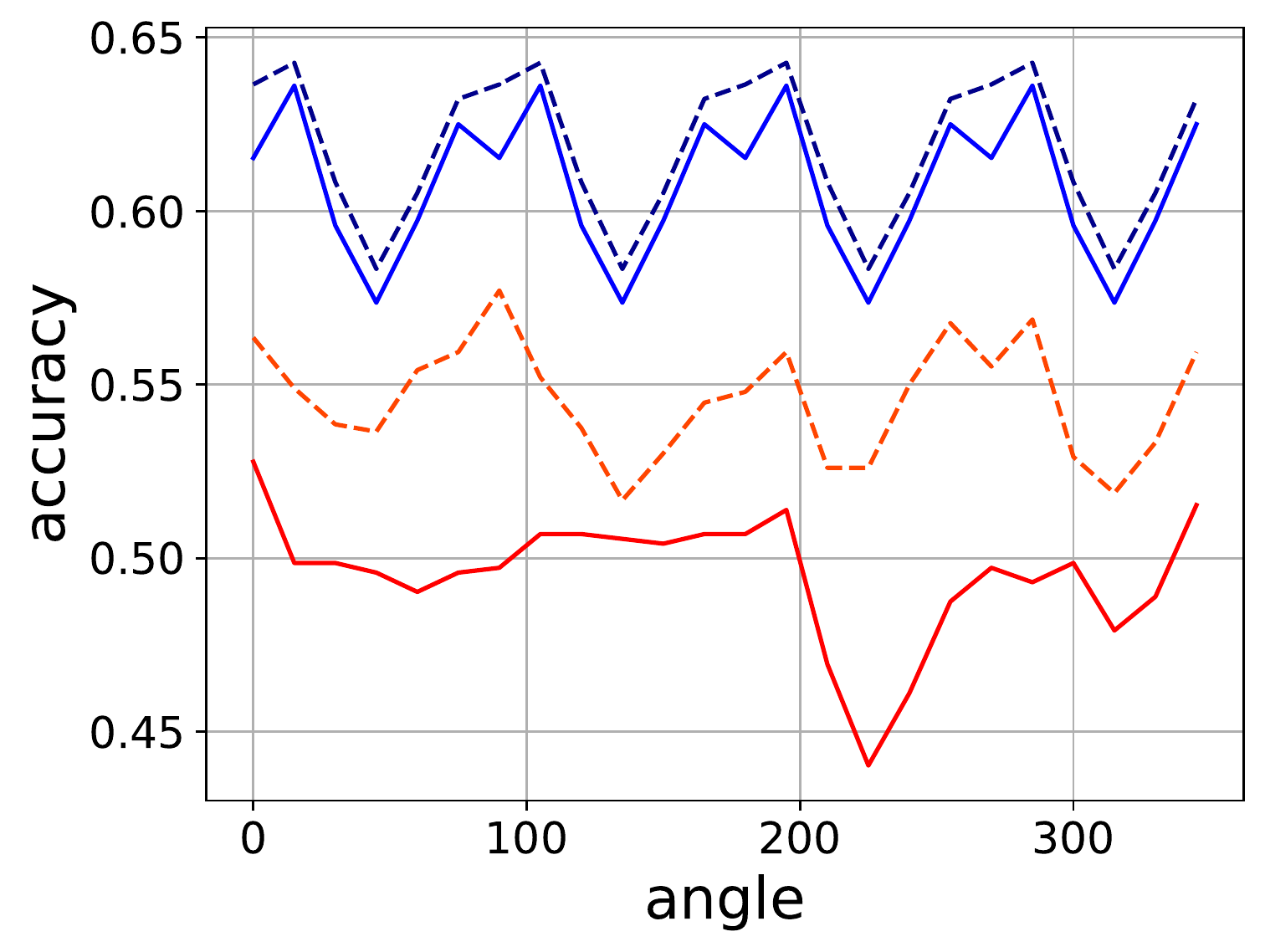}
    }
  }
\end{figure}

\begin{table}[t]
    \floatconts
    {tab:acc_medmnist}
    {\caption{Accuracies on the MedMNIST dataset compared to the benchmarks.}}
    {
    \scalebox{.55}{
    \begin{tabular}{|c|c|c|c|c|c|c|}
    \hline
                                 & OrganMNIST3D & NoduleMNIST3D & FractureMNIST3D & AdrenalMNIST3D & VesselMNIST3D & SynapseMNIST3D \\
    \hline
    ResNet18 + 3D  \citep{medmnistv2}  &         $0.907$  &         $0.844$  &         $0.508$  &         $0.721$  &         $0.877$  &         $0.745$   \\
    ResNet18 + ACS \citep{medmnistv2}  &         $0.900$  &         $0.847$  &         $0.497$  &         $0.754$  &         $0.928$  &         $0.722$   \\
    ResNet50 + 3D  \citep{medmnistv2}  &         $0.883$  &         $0.847$  &         $0.484$  &         $0.745$  &         $0.918$  &         $0.795$   \\
    ResNet50 + ACS \citep{medmnistv2}  &         $0.889$  &         $0.841$  &         $0.517$  &         $0.758$  &         $0.858$  &         $0.709$   \\
    auto-sklearn \citep{medmnistv1}    &         $0.814$  &         $\mathbf{0.914}$  &         $0.453$  &         $0.802$  &         $0.915$  &         $0.730$   \\
    3DMedPT \citep{yu20213d}    &         -        &         -        &         -        &  $0.791$     \           &         -  &        -           \\
    \hline
    CNN baseline (ours)                & $\mathbf{0.927}$ &         $0.871$  &         $0.528$  &         $0.824$  &         $0.949$  &         $0.775$  \\
    SE3MovFrNet (ours)                 &         $0.745$  &         $0.871$  &         $0.615$  &         $0.815$  &         $0.953$  & $\mathbf{0.896}$ \\
    CNN baseline, augmented (ours)     &         $0.602$  &         $0.856$  &         $0.564$  &         $0.820$  &         $0.933$  &         $0.803$  \\
    SE3MovFrNet, augmented (ours)      &         $0.756$  & $0.875$ & $\mathbf{0.636}$ & $\mathbf{0.830}$ & $\mathbf{0.958}$ &         $0.894$  \\
    \hline
    \end{tabular}
    }
    }
\end{table}

%

\section{Conclusions}
\label{sec:conc}
We have developed and successfully applied a SE(3)-equivariant architecture, SE3MovFNet, for a classification task in medical image processing.
The proposed SE3MovFNet is an extension of a previous approach for SE(2)-equivariant networks \citep{sangalli2022differential} that corrects some of its numerical issues.
The performance of our network is overall positive, as it attained the best results in $4$ of the $6$ evaluated datasets of MedMNIST and maintains a reasonable accuracy when images are rotated.
Future work will explore other symmetries on other manifolds like for example scale and rotation symmetry simultaneously for images or volumes.

\bibliography{references}

\begin{thebibliography}{25}
\providecommand{\natexlab}[1]{#1}
\providecommand{\url}[1]{\texttt{#1}}
\expandafter\ifx\csname urlstyle\endcsname\relax
  \providecommand{\doi}[1]{doi: #1}\else
  \providecommand{\doi}{doi: \begingroup \urlstyle{rm}\Url}\fi

\bibitem[Bogatskiy et~al.(2020)Bogatskiy, Anderson, Offermann, Roussi, Miller,
  and Kondor]{bogatskiy2020lorentz}
Alexander Bogatskiy, Brandon Anderson, Jan~T Offermann, Marwah Roussi, David~W
  Miller, and Risi Kondor.
\newblock Lorentz group equivariant neural network for particle physics.
\newblock \emph{International Conference on Machine Learning}, 2020.

\bibitem[Bronstein et~al.(2021)Bronstein, Bruna, Cohen, and
  Veli\v{c}ković]{bronstein2021geometric}
Michael~M Bronstein, Joan Bruna, Taco Cohen, and Petar Veli\v{c}ković.
\newblock Geometric deep learning: Grids, groups, graphs, geodesics, and
  gauges.
\newblock \emph{arXiv preprint arXiv:2104.13478}, 2021.

\bibitem[Cartan(1935)]{cartan1935methode}
{\'E}lie Cartan.
\newblock La m{\'e}thode du repere mobile, la th{\'e}orie des groupes continus,
  et les espaces g{\'e}n{\'e}ralis{\'e}s.
\newblock \emph{Bull. Amer. Math. Soc}, 41:\penalty0 774, 1935.

\bibitem[Chen et~al.(2021)Chen, Liu, Chen, Li, and Hill]{chen2021equivariant}
Haiwei Chen, Shichen Liu, Weikai Chen, Hao Li, and Randall Hill.
\newblock Equivariant point network for 3d point cloud analysis.
\newblock In \emph{Proceedings of the IEEE/CVF Conference on Computer Vision
  and Pattern Recognition}, pages 14514--14523, 2021.

\bibitem[Cohen and Welling(2016)]{cohen2016group}
Taco Cohen and Max Welling.
\newblock Group equivariant convolutional networks.
\newblock In \emph{International conference on machine learning}, pages
  2990--2999. Proceedings of Machine Learning Research, 2016.

\bibitem[Cohen et~al.(2019)Cohen, Geiger, and Weiler]{cohen2018general}
Taco~S Cohen, Mario Geiger, and Maurice Weiler.
\newblock A general theory of equivariant {CNN}s on homogeneous spaces.
\newblock In \emph{Advances in Neural Information Processing Systems},
  volume~32, 2019.

\bibitem[Fels and Olver(1999)]{fels1999moving}
Mark Fels and Peter~J Olver.
\newblock Moving coframes: {II}. regularization and theoretical foundations.
\newblock \emph{Acta Applicandae Mathematica}, 55\penalty0 (2):\penalty0
  127--208, 1999.

\bibitem[Jacobsen et~al.(2016)Jacobsen, Van~Gemert, Lou, and
  Smeulders]{jacobsen2016structured}
Jorn-Henrik Jacobsen, Jan Van~Gemert, Zhongyu Lou, and Arnold~WM Smeulders.
\newblock Structured receptive fields in cnns.
\newblock In \emph{Proceedings of the IEEE Conference on Computer Vision and
  Pattern Recognition}, pages 2610--2619, 2016.

\bibitem[Jenner and Weiler(2022)]{jenner2021steerable}
Erik Jenner and Maurice Weiler.
\newblock Steerable partial differential operators for equivariant neural
  networks.
\newblock In \emph{International Conference of Learning Representations}, 2022.

\bibitem[Melnyk et~al.(2021)Melnyk, Felsberg, and
  Wadenb{\"a}ck]{melnyk2021embed}
Pavlo Melnyk, Michael Felsberg, and M{\aa}rten Wadenb{\"a}ck.
\newblock Embed me if you can: A geometric perceptron.
\newblock In \emph{Proceedings of the IEEE/CVF International Conference on
  Computer Vision}, pages 1276--1284, 2021.

\bibitem[Olver(1993)]{olver1993applications}
Peter~J Olver.
\newblock \emph{Applications of Lie groups to differential equations}, volume
  107.
\newblock Springer Science \& Business Media, 1993.

\bibitem[Olver(2007)]{olver2007generating}
Peter~J Olver.
\newblock Generating differential invariants.
\newblock \emph{Journal of Mathematical Analysis and Applications},
  333\penalty0 (1):\penalty0 450--471, 2007.

\bibitem[Penaud-Polge et~al.(2022)Penaud-Polge, Velasco-Forero, Angulo,
  et~al.]{penaud2022fully}
Valentin Penaud-Polge, Santiago Velasco-Forero, Jesus Angulo, et~al.
\newblock Fully trainable gaussian derivative convolutional layer.
\newblock In \emph{29th IEEE International Conference on Image Processing (IEEE
  ICIP)}, 2022.

\bibitem[Ruthotto and Haber(2020)]{ruthotto2020deep}
Lars Ruthotto and Eldad Haber.
\newblock Deep neural networks motivated by partial differential equations.
\newblock \emph{Journal of Mathematical Imaging and Vision}, 62\penalty0
  (3):\penalty0 352--364, 2020.

\bibitem[Sangalli et~al.(2022)Sangalli, Blusseau, Velasco-Forero, and
  Angulo]{sangalli2022differential}
Mateus Sangalli, Samy Blusseau, Santiago Velasco-Forero, and Jes{\'u}s Angulo.
\newblock Differential invariants for {SE(2)}-equivariant networks.
\newblock In \emph{29th IEEE International Conference on Image Processing (IEEE
  ICIP)}, 2022.

\bibitem[Shen et~al.(2020)Shen, He, Lin, and Ma]{shen2020pdo}
Zhengyang Shen, Lingshen He, Zhouchen Lin, and Jinwen Ma.
\newblock Pdo-econvs: Partial differential operator based equivariant
  convolutions.
\newblock In \emph{International Conference on Machine Learning}, pages
  8697--8706. Proceedings of Machine Learning Research, 2020.

\bibitem[Shen et~al.(2022)Shen, Hong, She, Ma, and Lin]{shen2022pdo}
Zhengyang Shen, Tao Hong, Qi~She, Jinwen Ma, and Zhouchen Lin.
\newblock Pdo-s3dcnns: Partial differential operator based steerable 3d cnns.
\newblock In \emph{International Conference on Machine Learning}, pages
  19827--19846. Proceedings of Machine Learning Research, 2022.

\bibitem[Thomas(2020)]{thomas2020rotation}
Hugues Thomas.
\newblock Rotation-invariant point convolution with multiple equivariant
  alignments.
\newblock In \emph{2020 International Conference on 3D Vision (3DV)}, pages
  504--513. IEEE, 2020.

\bibitem[Thomas et~al.(2018)Thomas, Smidt, Kearnes, Yang, Li, Kohlhoff, and
  Riley]{thomas2018tensor}
Nathaniel Thomas, Tess Smidt, Steven Kearnes, Lusann Yang, Li~Li, Kai Kohlhoff,
  and Patrick Riley.
\newblock Tensor field networks: Rotation-and translation-equivariant neural
  networks for 3d point clouds.
\newblock \emph{arXiv preprint arXiv:1802.08219}, 2018.

\bibitem[Weiler et~al.(2018)Weiler, Hamprecht, and
  Storath]{weiler2018steerable}
Maurice Weiler, Fred~A Hamprecht, and Martin Storath.
\newblock Learning steerable filters for rotation equivariant cnns.
\newblock In \emph{Proceedings of the IEEE Conference on Computer Vision and
  Pattern Recognition}, pages 849--858, 2018.

\bibitem[Worrall and Brostow(2018)]{worrall2018cubenet}
Daniel Worrall and Gabriel Brostow.
\newblock Cubenet: Equivariance to 3d rotation and translation.
\newblock In \emph{Proceedings of the European Conference on Computer Vision
  (ECCV)}, pages 567--584, 2018.

\bibitem[Worrall et~al.(2017)Worrall, Garbin, Turmukhambetov, and
  Brostow]{worrall2017harmonic}
Daniel~E Worrall, Stephan~J Garbin, Daniyar Turmukhambetov, and Gabriel~J
  Brostow.
\newblock Harmonic networks: Deep translation and rotation equivariance.
\newblock In \emph{Proceedings of the IEEE Conference on Computer Vision and
  Pattern Recognition}, pages 5028--5037, 2017.

\bibitem[Yang et~al.(2021{\natexlab{a}})Yang, Shi, and Ni]{medmnistv1}
Jiancheng Yang, Rui Shi, and Bingbing Ni.
\newblock Medmnist classification decathlon: A lightweight automl benchmark for
  medical image analysis.
\newblock In \emph{IEEE 18th International Symposium on Biomedical Imaging
  (ISBI)}, pages 191--195, 2021{\natexlab{a}}.

\bibitem[Yang et~al.(2021{\natexlab{b}})Yang, Shi, Wei, Liu, Zhao, Ke, Pfister,
  and Ni]{medmnistv2}
Jiancheng Yang, Rui Shi, Donglai Wei, Zequan Liu, Lin Zhao, Bilian Ke,
  Hanspeter Pfister, and Bingbing Ni.
\newblock Medmnist v2: A large-scale lightweight benchmark for 2d and 3d
  biomedical image classification.
\newblock \emph{arXiv preprint arXiv:2110.14795}, 2021{\natexlab{b}}.

\bibitem[Yu et~al.(2021)Yu, Zhang, Wang, Zhang, Song, Xiang, Liu, and
  Cai]{yu20213d}
Jianhui Yu, Chaoyi Zhang, Heng Wang, Dingxin Zhang, Yang Song, Tiange Xiang,
  Dongnan Liu, and Weidong Cai.
\newblock 3d medical point transformer: Introducing convolution to attention
  networks for medical point cloud analysis.
\newblock \emph{arXiv preprint arXiv:2112.04863}, 2021.

\end{thebibliography}

\appendix


\section{Properties of Group Actions}
\label{apd:group_actions_props}
If $\liegroup$ is a Lie group acting on a manifold $\manifold$, we define the orbit passing through a point $\z \in \manifold$ as $\orbit_\z$ as the submanifold
\begin{equation}
    \orbit_{\z} = \{g \cdot \z | g \in \liegroup\}.
\end{equation}

We can classify the action of a Lie group as
\begin{itemize}
    \item \emph{semi-regular} if all the group orbits have the same dimension;
    \item \emph{regular} if it is semi-regular and each point $\z \in \manifold$ as an arbitrarily small neighborhood $U$ containing $\z$ such that the intersection of $U$ and each orbit is connected;
    \item \emph{free} if for $g \in \liegroup$, $\z \in \manifold, \; $ $g \cdot \z = \z \implies g = e$ where $e$ is the identity on $\liegroup$. 
    \item \emph{locally free} if there exists a neighborhood $U \subseteq \liegroup$ containing $e$ satisfying $\forall g \in U$, $\z \in \manifold$ $g \cdot \z = \z \implies g = e$.
\end{itemize}

\section{From Invariants to Equivariant Operators}
\label{apd:inv2equiv}

We assume we are in the context where we compute the invariantization of operators $F:\manifold_1 \to \manifold_2$, with $\manifold_1 = X \times Y$ and $\manifold_2 = Z$, and the action $\liegroup$ can be decomposed into actions in $X$ and $Y$, i.e. 
\begin{equation}
    \label{eq:action_decomposition}
    g \cdot (\x, u) = (g \cdot \x, g \cdot u),
\end{equation}
for all $g \in \liegroup$, $(\x, u) \in X \times Y$. An equivariant operator in $\manifold_1$ can be related to an equivariant one in the space of functions $Y^X$. Suppose $\psi: \manifold_1 = X \times Y \to Z$ is $\liegroup$-invariant, then take $\bar{\psi}: Y^X \to Z^X$ to be $\bar{\psi}[f](\x) = \psi(\x, f(\x))$, for all $\x \in X$, $f \in Y^X$. Assuming that the action on $Y$ is the identity $g \cdot u = u$ for $u \in Y$ and the same for $Z$ we have
\begin{equation}
    \begin{array}{ll}
    \bar{\psi}(g \cdot f)(\x) & = \psi(\x, (g \cdot f)(\x))) \\ & = \psi(\x, f(g^{-1} \cdot \x)] \\ & =  \psi(g^{-1} \x, f(g^{-1} \cdot \x) \\ & = \bar{\psi}(f)(g^{-1} \cdot \x) =  [g \cdot \bar{\psi}(f)](\x),
    \end{array}
\end{equation}
therefore $g \cdot \bar{\psi}(f) = \bar{\psi}(g \cdot f)$ for all $g \in \liegroup$, $f \in Y^X$. In other words, an invariant operator in the Cartesian product $X \times Y$ to $Z$ induces an equivariant operator taking functions in $Y^X$ to functions in $Z^X$.

The same reasoning can be applied if $\psi: J^{n}(\manifold_1) = X \times Y^{(n)}$ is an invariant on the Jet-space and $\bar{\psi}:C^{\infty}(X, Y) \to C^{\infty}(X, Z)$ is the operator $\bar{\psi}[f](\x) = \psi(\x, \njet{f}{n}{\x})$ to show that $\bar{\psi}$ is $\liegroup$-equivariant. We have
\begin{equation}
    \begin{array}{ll}
    \bar{\psi}(g \cdot f)(\x) & = \psi(\x, \njet{(g \cdot f)}{n}{\x}) \\
                              & = \psi(g^{-1} \cdot (\x, \njet{(g \cdot f)}{n}{\x})) \;\; \text{by  invariance of } \psi \\
                              & = \psi(g^{-1} \cdot \x, \njet{(g^{-1} \cdot g \cdot f)}{n}{g^{-1} \cdot \x}) \;\; \text{ by \eqref{eq:prolonged_action}} \\
                              & = \psi(g^{-1} \cdot \x, \njet{f}{n}{g^{-1} \cdot \x})\\
                              & = \bar{\psi}(f)(g^{-1} \cdot \x) =  [g \cdot \bar{\psi}(f)](\x),
    \end{array}
\end{equation}

\section{Proof of Proposition \ref{prop:equivariantization}}
\label{apd:proof_equivariantization}
\begin{proof}
    For $l = 1$ we have
    $$
    \phi_1(g \cdot f)(\x)  = \psi_1 \left(\rho(\x, \njet{(g \cdot f)}{n}{\x}) \cdot (\x, \njet{(g \cdot f)}{n}{\x}) \right).
    $$
    Noting $\y = g^{-1}\cdot\x$ and recalling \eqref{eq:prolonged_action_se3}, $g\cdot (\x, \njet{f}{n}{\x}) = (g\cdot\x, \njet{(g \cdot f)}{n}{(g\cdot\x)}) $, we can simplify
    \begin{equation}
        \label{eq:prop_proof_subst}
        (\x, \njet{(g \cdot f)}{n}{\x}) = (g\cdot\y, \njet{(g \cdot f)}{n}{(g\cdot\y)}) = g\cdot (\y, \njet{f}{n}{\y}) = g\cdot \z^{0}_{\y}.
    \end{equation}
    Hence,
    \begin{equation}
            \begin{array}{ll}
            \phi_1(g \cdot f)(\x) & = \psi_1 \left(\rho(g\cdot \z^{0}_{\y}) \cdot (g\cdot \z^{0}_{\y}) \right) \\
                                  & = \psi_1 \left(\rho(\z^{0}_{\y}) \cdot \z^{0}_{\y} \right)  \text{ \;\; by invariance of } \z \mapsto \rho(\z) \cdot \z \\
                                  & = \phi_1 (f)(\y) \text{ \;\; by \eqref{eq:phi_1}}\\
                                  & = \phi_1(f)(g^{-1} \cdot \x)\\
                                  & = (g \cdot \phi_1(f)) (\x) \text{ \;\; by definition of the action on functions}.
        \end{array}
    \end{equation}
    Therefore, $\phi_1(g \cdot f) = g \cdot \phi_1(f)$.

Now, for $l > 1$, we have for the case \eqref{eq:prop_direct_layer},
\begin{equation}
    \phi_l(g \cdot f)(\x) = \psi_l \left(\rho(\x, \njet{(g \cdot f)}{n}{\x}) \cdot (\x, \njet{\phi_{l-1}(g \cdot f)_j}{n}{\x})_{1\leq j \leq \nchannels_l} \right).
\end{equation}
As shown earlier, $(\x, \njet{(g \cdot f)}{n}{\x}) = g\cdot \z^{0}_{\y}$ with $\y=g^{-1}\cdot\x$. Similarly, assuming that $\phi_{l-1}$ is equivariant,
\begin{equation}
    (\x, \njet{\phi_{l-1}(g \cdot f)_j}{n}{\x}) = g\cdot (\y, \njet{\phi_{l-1}(f)_j}{n}{\y})) = g\cdot \z^{l-1, j}_{\y},
\end{equation}
so that 
\begin{equation}
    \phi_l(g \cdot f)(\x) = \psi_l \left(\rho(g\cdot \z^{0}_{\y}) \cdot (g \cdot \z^{l-1, j}_{\y})_{1 \leq j \leq \nchannels_l} \right).
\end{equation}
Since furthermore $\rho(g\cdot \z^{0}_{\y}) = \rho(\z^{0}_{\y})\cdot g^{-1}$ by definition of a moving frame, we finally get
     \begin{equation}
         \begin{array}{ll}
\phi_l(g \cdot f)(\x)
             & = \psi_l \left(\rho(\z^{0}_{\y}) \cdot g^{-1} \cdot (g \cdot \z^{l-1, j}_{\y})_{1 \leq j \leq \nchannels_l} \right) \\
             & = \psi_l \left(\rho(\z^{0}_{\y}) \cdot (\z^{l-1, j}_{\y})_{1 \leq j \leq \nchannels_l} \right)\\
            & = \phi_l(f)(\y) = \phi_l(f)(g^{-1}\cdot\x) \\
            & = (g\cdot\phi_l(f))(\x).
        \end{array}
     \end{equation}
    As for the case \eqref{eq:prop_residual_layer},
    \begin{equation}
        \begin{array}{ll}
            \phi_l(g \cdot f)(\x) & = \phi_{l-1}(g \cdot f)(\x) + \psi_l \left(\rho(\x, \njet{(g \cdot f)}{n}{\x}) \cdot \njet{\phi_{l-1}(g \cdot f)}{n}{\x} \right) \\
                                 & = \phi_{l-1}(f)(\y) + \psi_l \left(\rho(\z^0_{\y}) \cdot \z^{l-1}_{\y} \right) \text{ like above, with } \y = g^{-1}\cdot\x\\
                                 & = \phi_l(f)(\y) \\
                                 & = (g \cdot \phi_l(f))(\x).
        \end{array}
    \end{equation}
    Therefore in all cases $\phi_l(g \cdot f) = g \cdot \phi_l(f)$ provided this is true for $\phi_{l-1}$,  and the proposition follows by induction.
\end{proof}

\section{Complexity Analysis}

Let us assume that the input is given as a signal $f:\Omega \to \R$ where $\Omega = \{0, \dots, W-1\} \times \{0, \dots, H-1\} \times \{0, \dots, D-1\}$ is a grid of size $W \times H \times D$. Moreover, let us assume that we compute the moving frame using Gaussian derivatives of scale $\sigma'$ and the derivatives at other layers using $\sigma$, and that the discrete Gaussian derivative filters all have dimension $w' \times w' \times w'$, for the moving frame and $w \times w \times w$ for the other layers.

The computation of the moving frame is done as follows: 
\begin{itemize}
    \item compute all Gaussian derivatives of order one and two of $f$. Gaussian derivatives are separable, thus each one can be obtained by three convolutions with a filter of size $w$, which have cost a cost of $O(WHD w')$ floating point operations (flops); 
    \item compute the eigenvectors of the Hessian. Since the matrices have constant size $3 \times 3$ we consider this operation is done in constant time for each pixel and this step is done in $O(WHD)$ flops.
\end{itemize}
So the computation of the moving frame is done in $O(WHDw')$ flops.

From there on if we compute a layer with $q'$ input feature maps and $q$ output feature maps:
\begin{itemize}
    \item this layer computes $\binom{n + 3}{n}$ Gaussian derivatives for each input feature map, where $n$ is the order of differentiation used, resulting in $O \left( \binom{n+3}{n} q' w WHD \right)$ flops;
    \item to compute the prolonged group action, it can be verified that the equivariant group action can be expressed as polynomial in the partial derivatives, and thus it takes $O \left( \binom{n+3}{n} q' WHD \right)$ flops;
    \item the previous step is followed by an $L$-layer multi-layer perceptron at each voxel. Assuming that the output dimension at each layer of the MLP is at most $q$ we have that this step takes $O ( q' q + L q^2)$ flops.
\end{itemize} 
The complexity of a layer of SE3MovF is the sum of the complexity of each step, i.e. it can be done in $O \left( \binom{n+3}{n} q' w WHD + q'q + L q^2 \right)$ flops. In our experiments here we used $n = 2$ and $L = 2$ for all models, so the impact of those terms is very limited.

\section{Additional Results}
\label{apd:results_rot}

\begin{figure}[t]
\centering 
\includeteximage[width=.98\textwidth]{legend.tex}

\floatconts
  {fig:acc_medmnist_2}
  {\caption{Additional results of evaluation on rotated volumes on the coordinate axes for NoduleMNIST3D, AdrenalMNIST3D and SynapseMNIST3D. In the first column volumes are rotated around the Z-axis, In the second Y-axis and in the third column X-axis.}.}
  {%
    \subfigure[NoduleMNIST3D]{
        \label{fig:nodxy}
        \includegraphics[width=.3\textwidth]{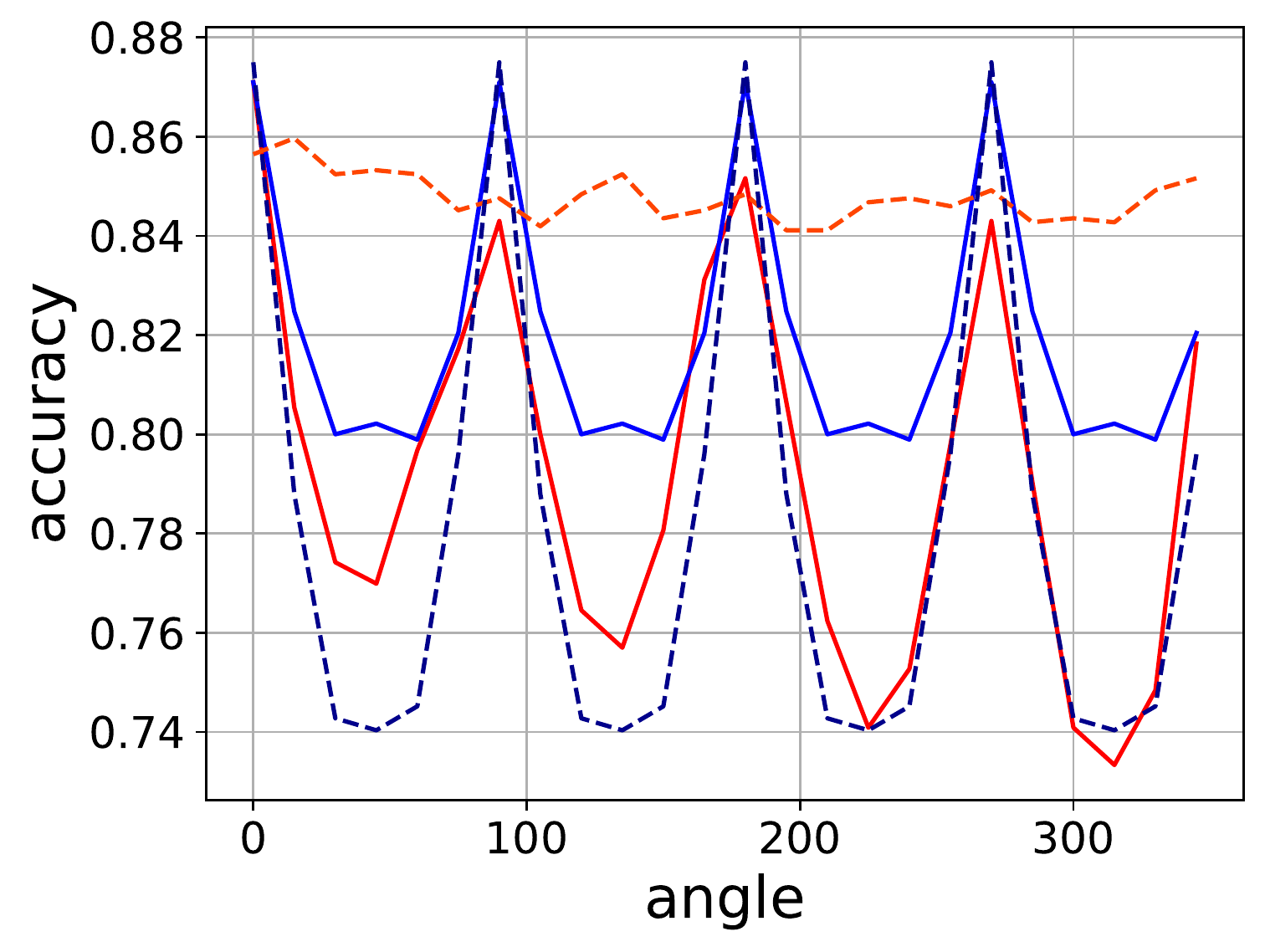}
    }
    \subfigure[NoduleMNIST3D]{
        \label{fig:nodzx}
        \includegraphics[width=.3\textwidth]{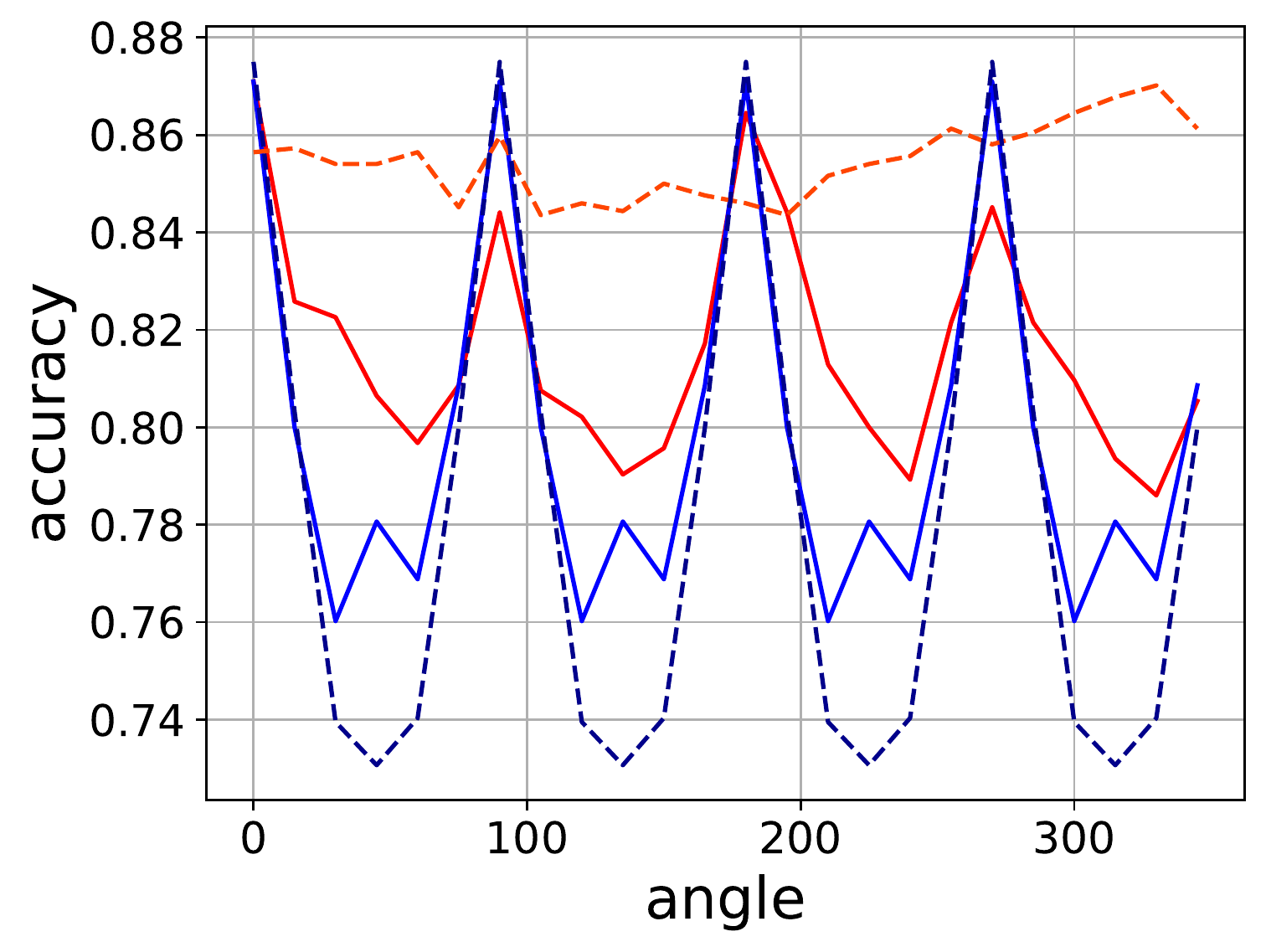}
    }
    \subfigure[NoduleMNIST3D X-axis]{
        \label{fig:nodyz}
        \includegraphics[width=.3\textwidth]{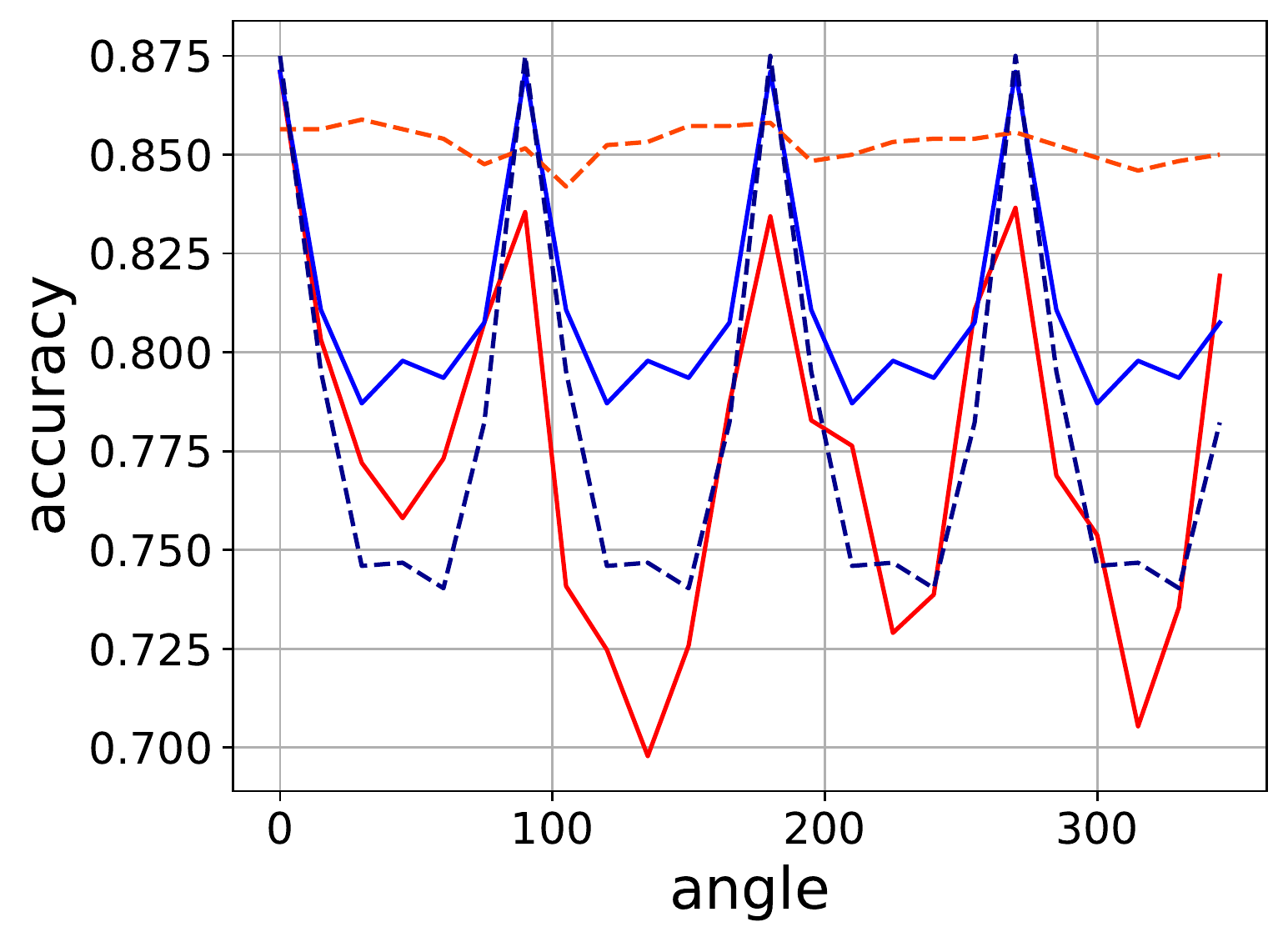}
    }
    
    \subfigure[AdrenalMNIST3D]{
        \label{fig:adxy}
        \includegraphics[width=.3\textwidth]{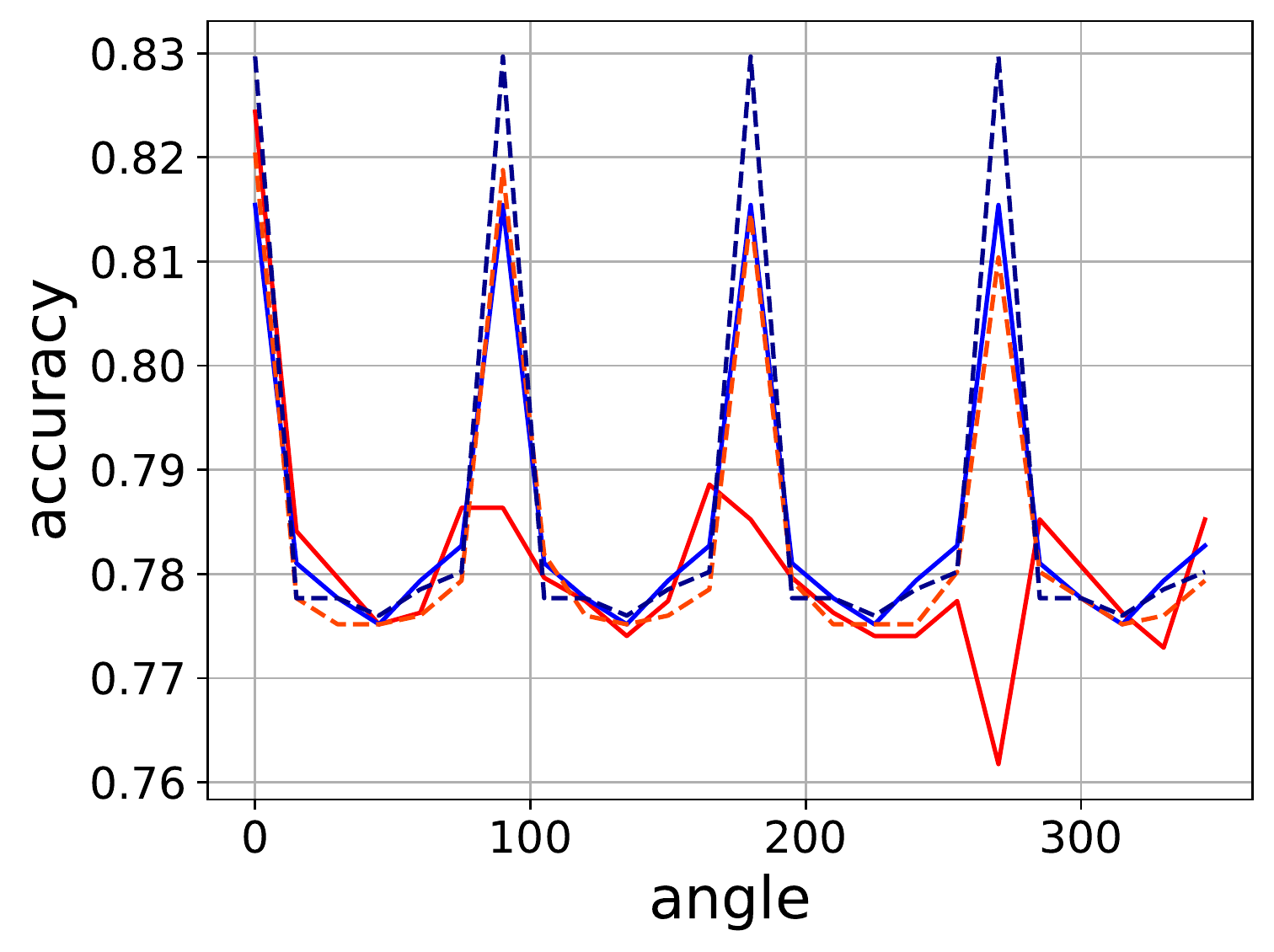}
    }
    \subfigure[AdrenalMNIST3D]{
        \label{fig:adzx}
        \includegraphics[width=.3\textwidth]{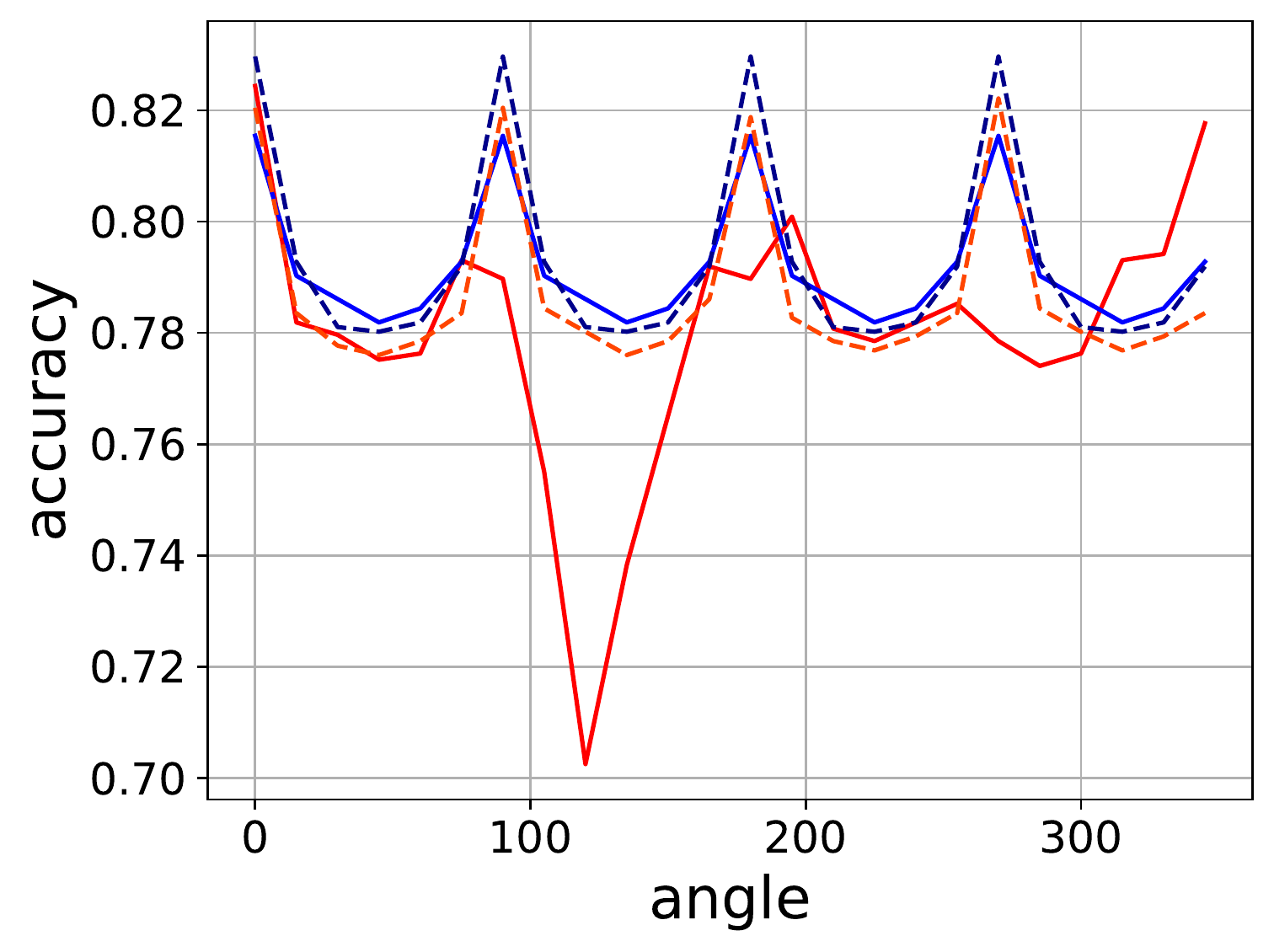}
    }
    \subfigure[AdrenalMNIST3D]{
        \label{fig:adyz}
        \includegraphics[width=.3\textwidth]{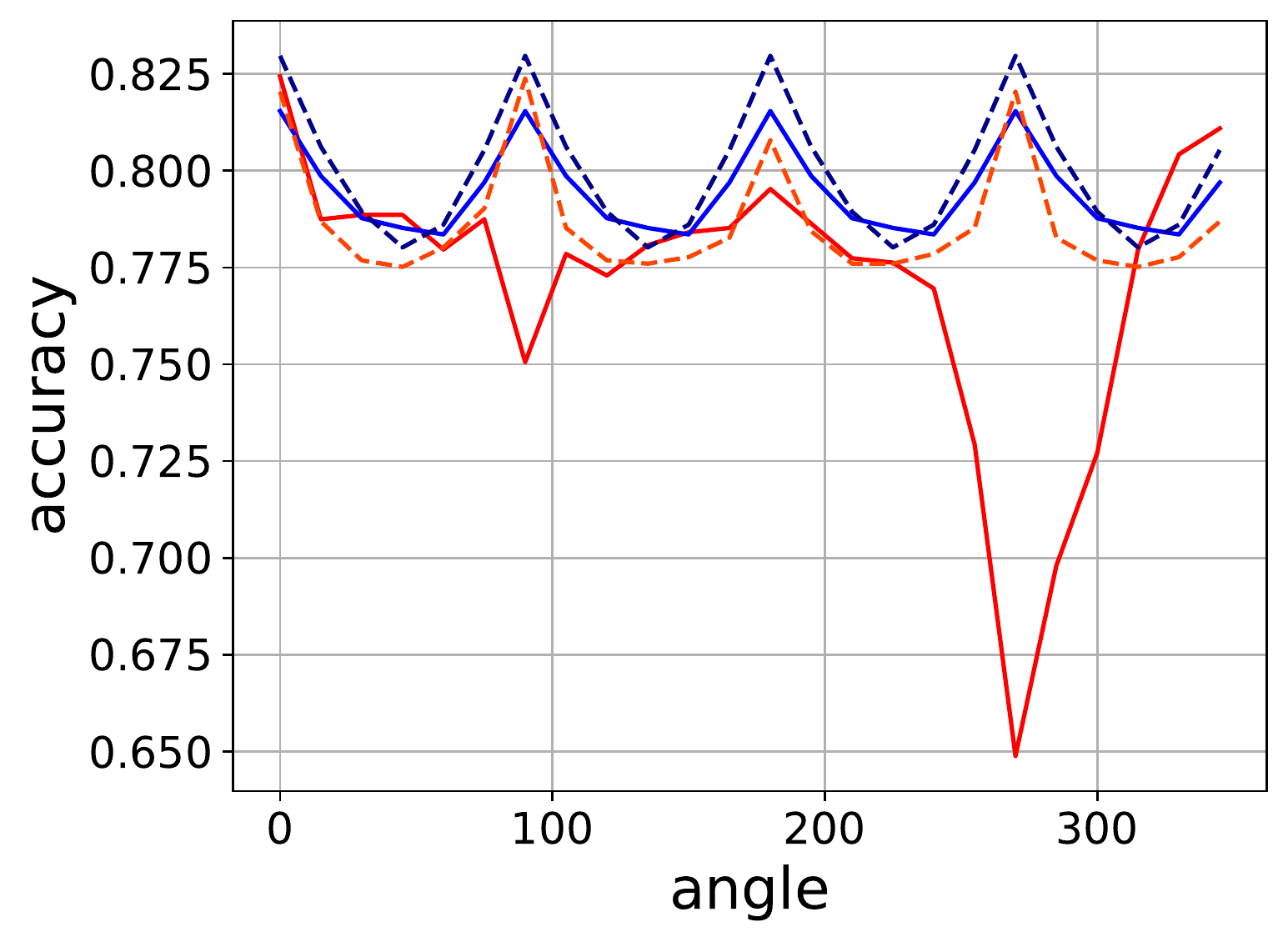}
    }
    
    \subfigure[SynapseMNIST3D]{
        \label{fig:synxy}
        \includegraphics[width=.3\textwidth]{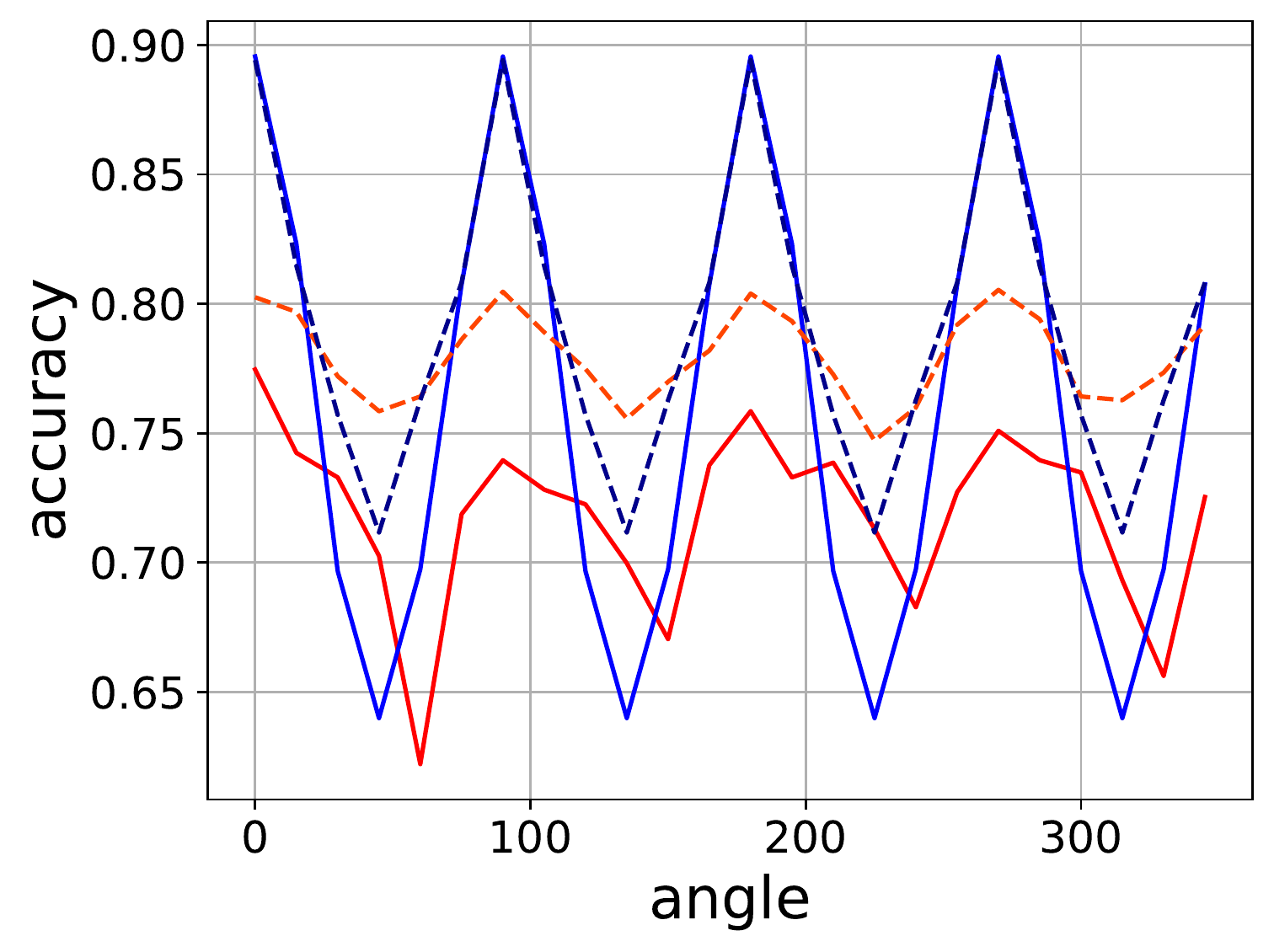}
    }
    \subfigure[SynapseMNIST3D]{
        \label{fig:synzx}
        \includegraphics[width=.3\textwidth]{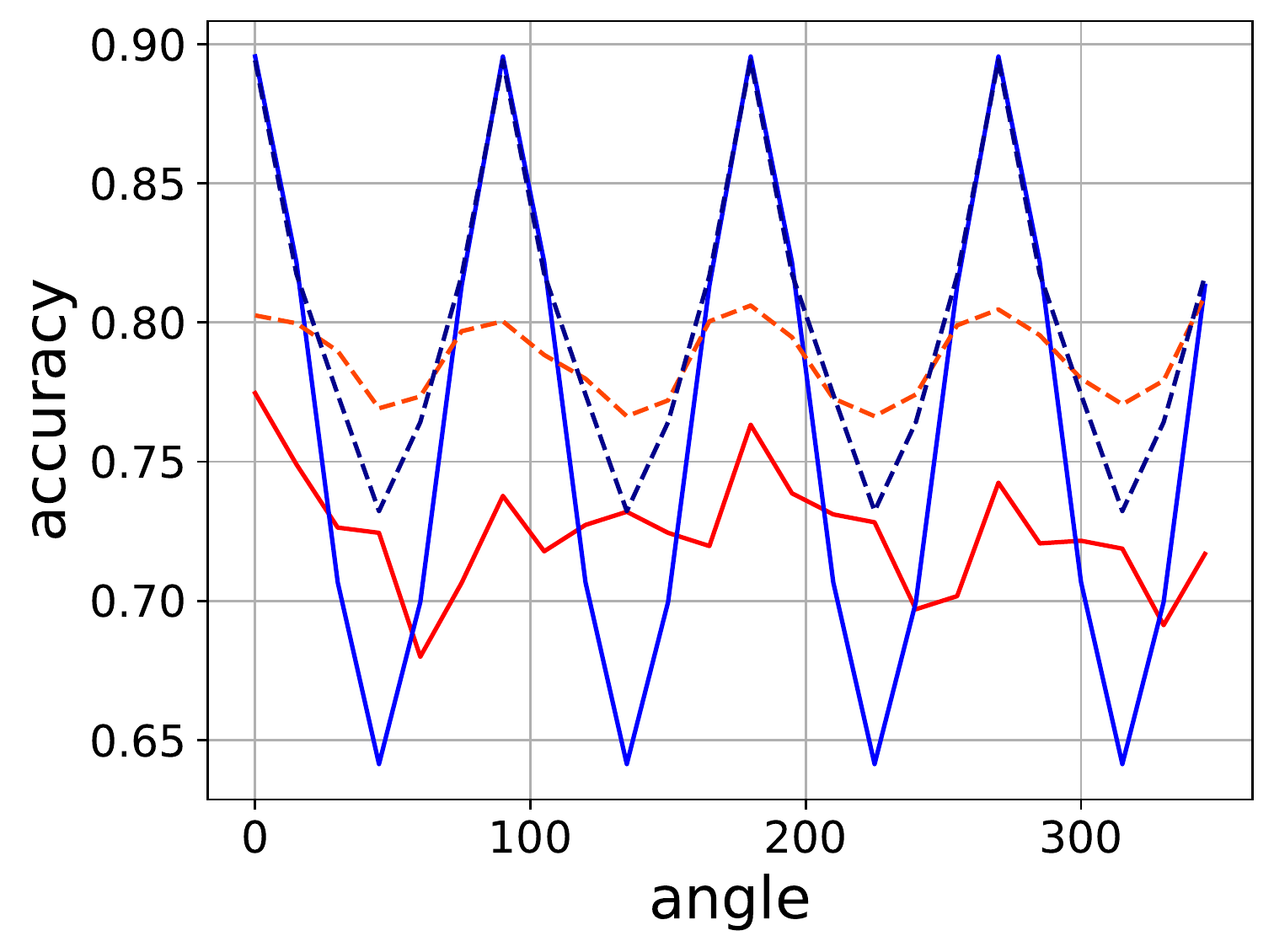}
    }
    \subfigure[SynapseMNIST3D]{
        \label{fig:synz}
        \includegraphics[width=.3\textwidth]{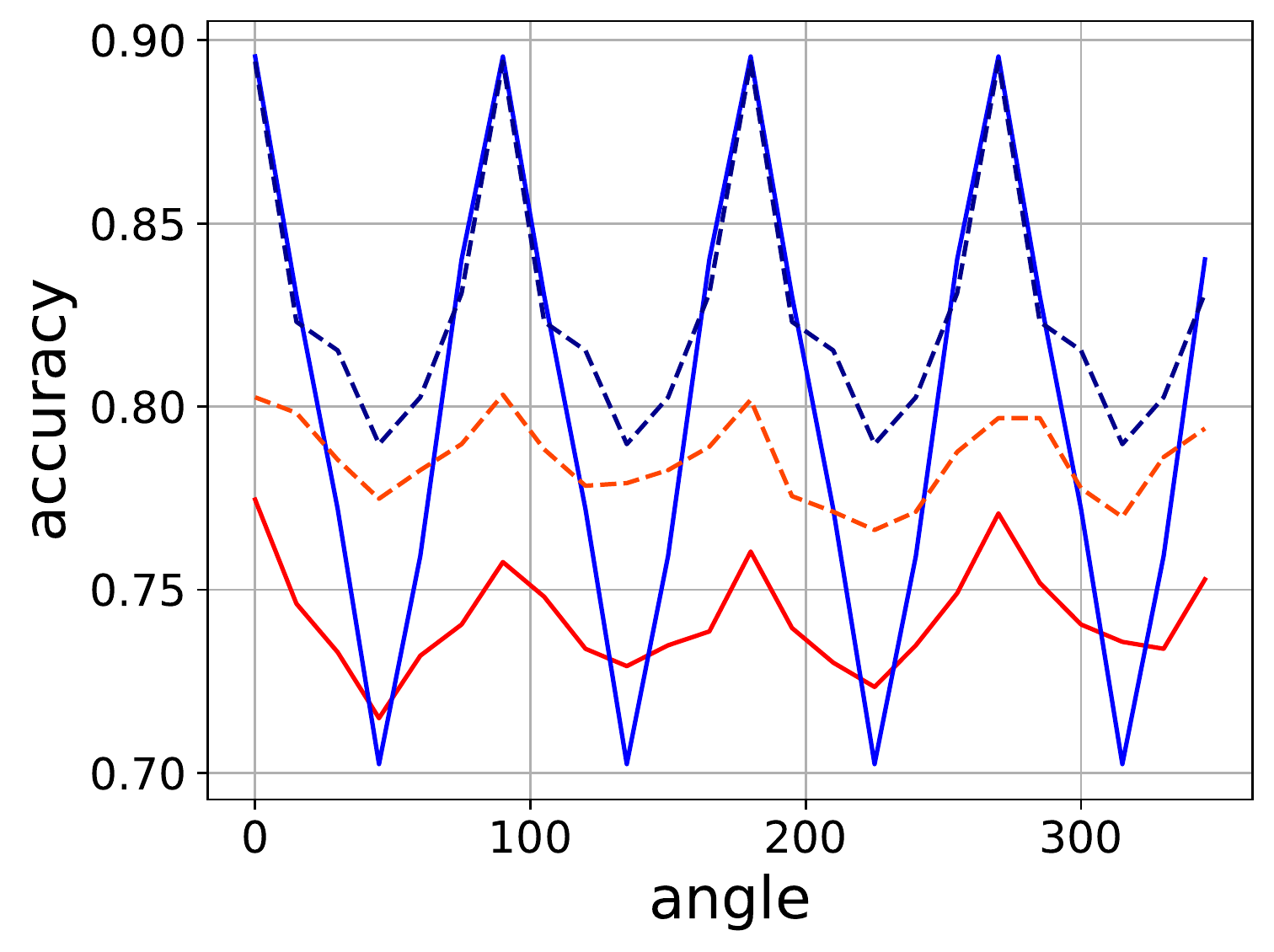}
    }
  }
\end{figure}

Figures \ref{fig:acc_medmnist_2} and \ref{fig:acc_medmnist_3} show some more results on MedMNIST3D.

\begin{figure}[t]
\centering 
\includeteximage[width=.98\textwidth]{legend.tex}

\floatconts
  {fig:acc_medmnist_3}
  {\caption{Additional results of evaluation on rotated volumes on the coordinate axes for OrganMNIST3D, VesselMNIST3D. In the first column volumes are rotated around the Z-axis, In the second Y-axis and in the third column X-axis.}}
  {%
    \subfigure[OrganMNIST3D]{
        \label{fig:orgxy}
        \includegraphics[width=.3\textwidth]{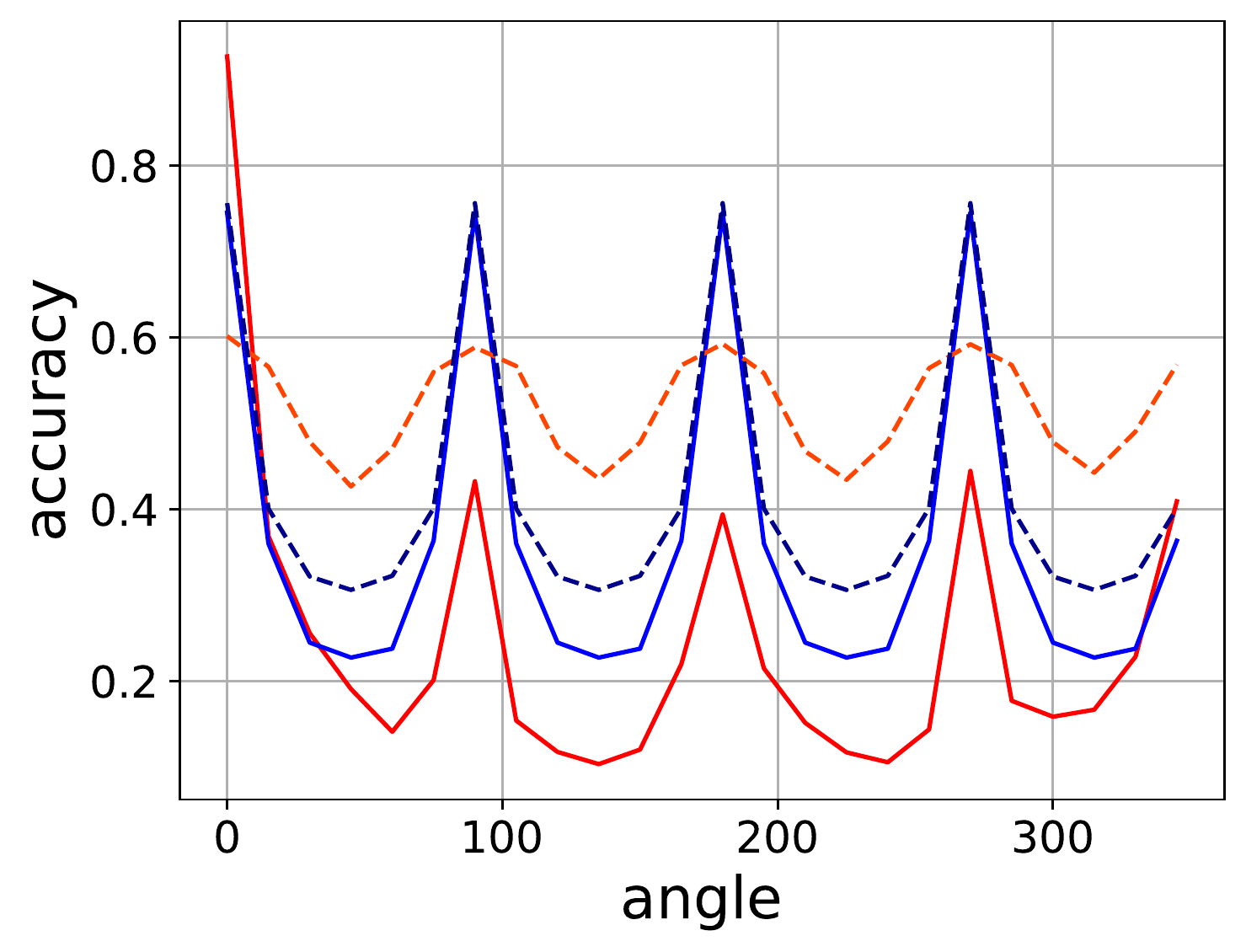}
    }
    \subfigure[OrganMNIST3D]{
        \label{fig:orgzx}
        \includegraphics[width=.3\textwidth]{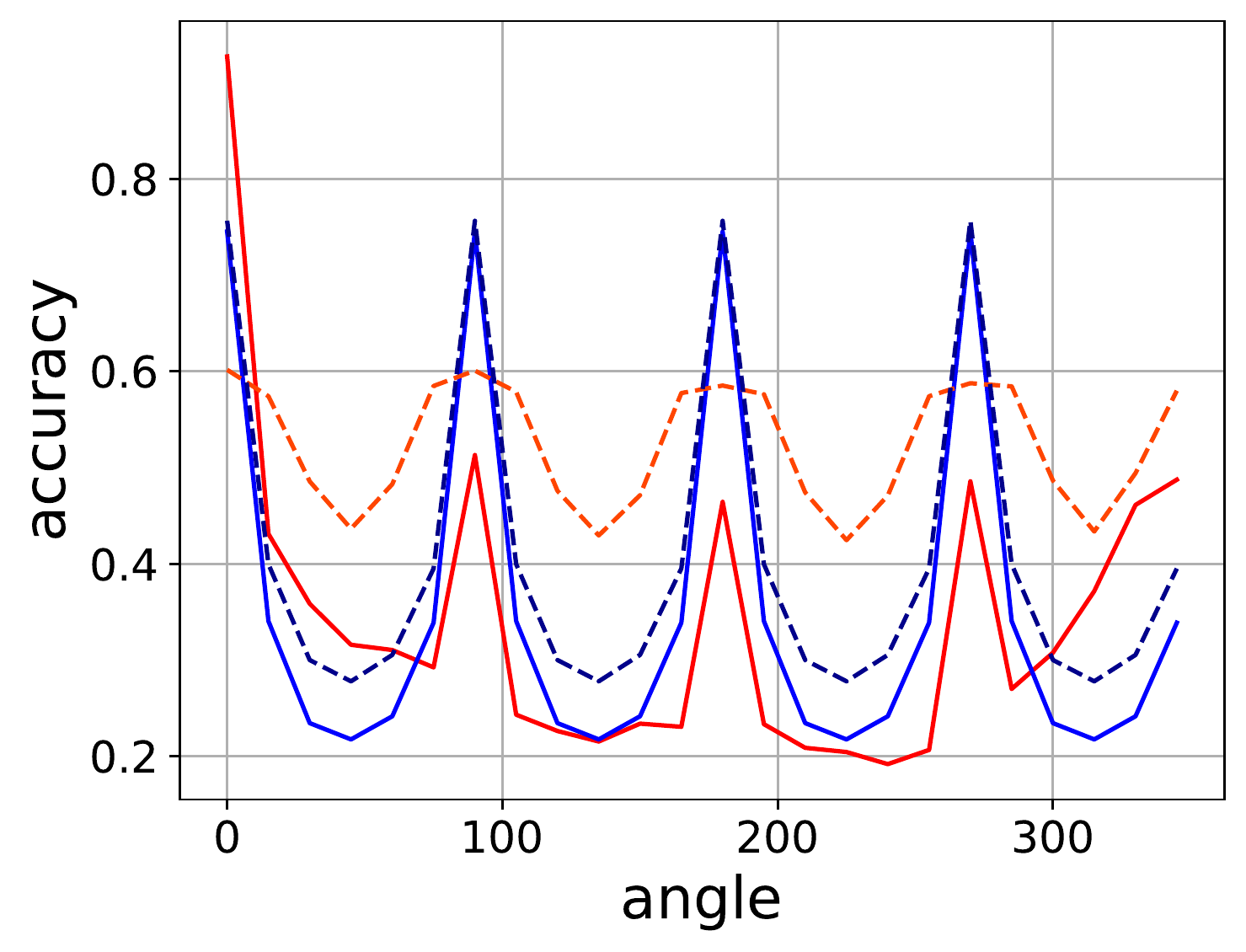}
    }
    \subfigure[OrganMNIST3D]{
        \label{fig:orgyz}
        \includegraphics[width=.3\textwidth]{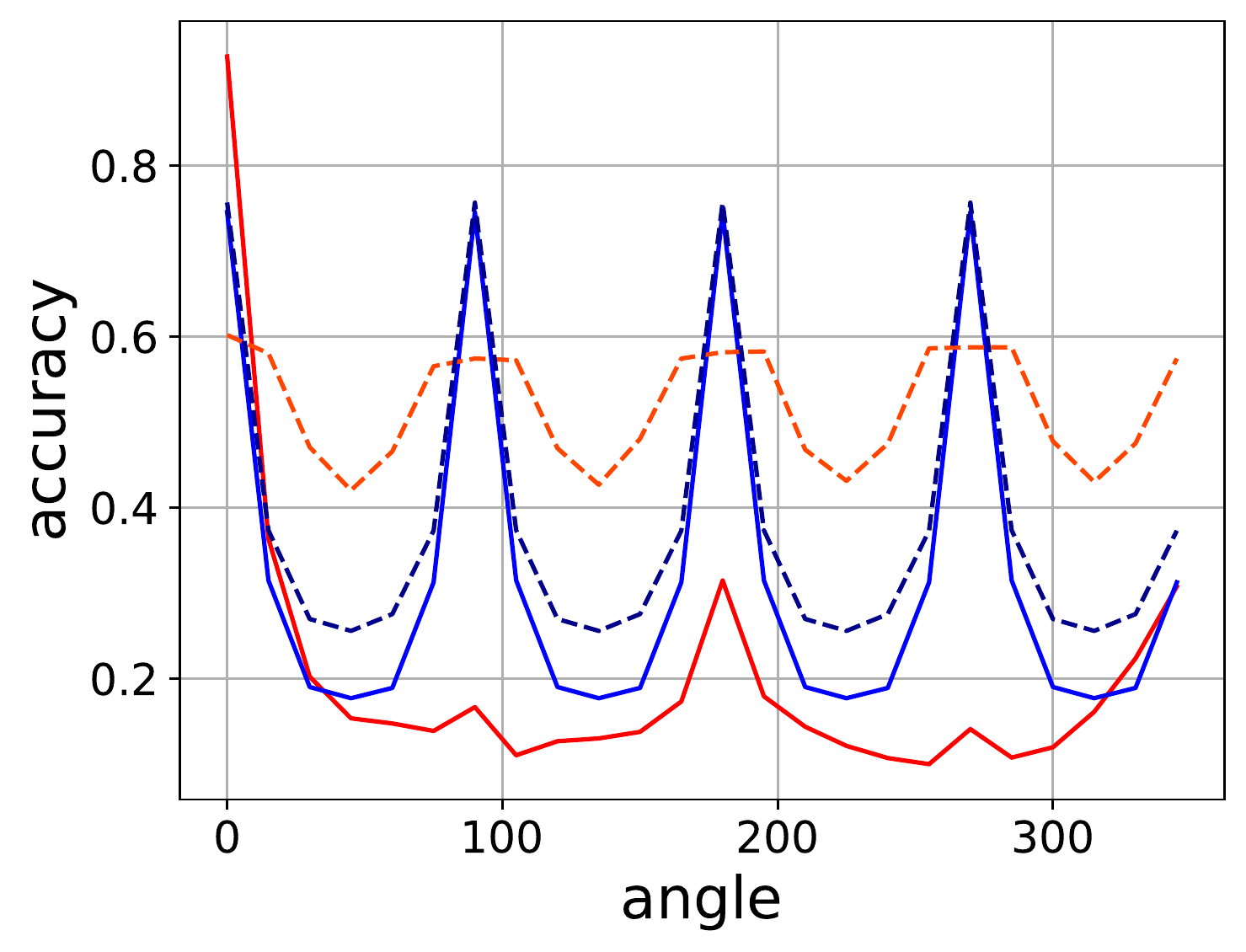}
    }
    
    \subfigure[VesselMNIST3D]{
        \label{fig:vessxy}
        \includegraphics[width=.3\textwidth]{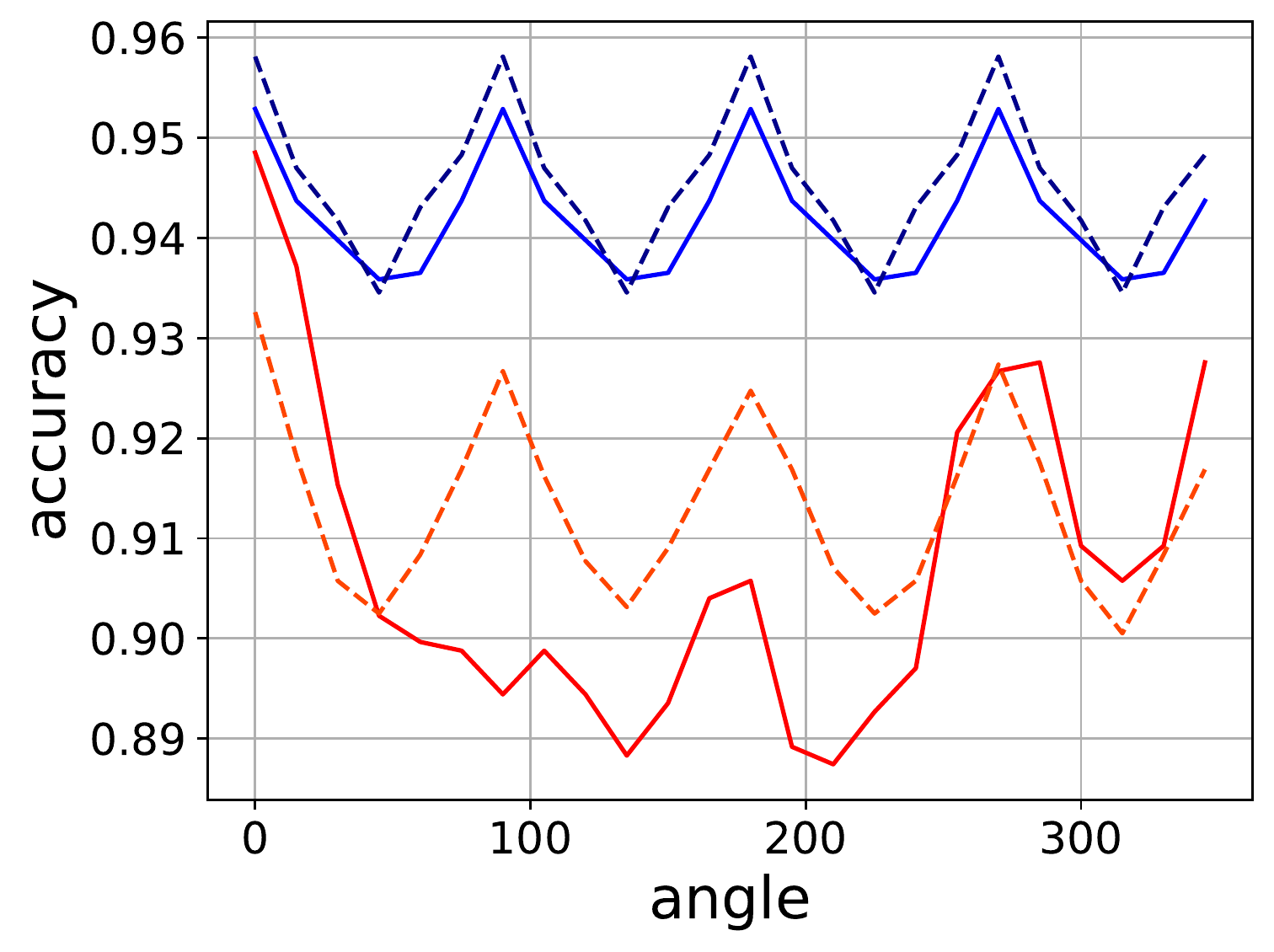}
    }
    \subfigure[VesselMNIST3D]{
        \label{fig:vesszx}
        \includegraphics[width=.3\textwidth]{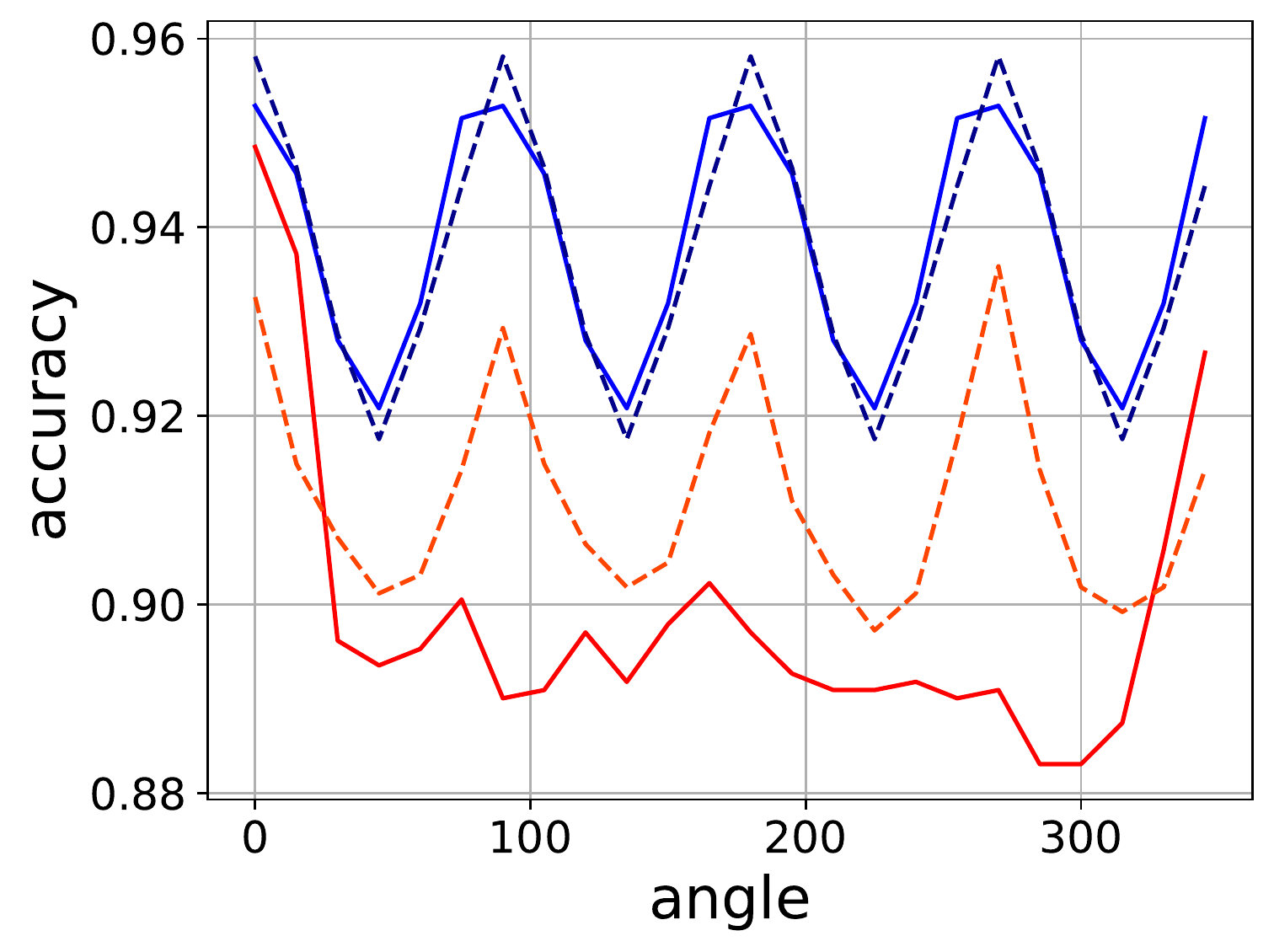}
    }
    \subfigure[VesselMNIST3D]{
        \label{fig:vessyz}
        \includegraphics[width=.3\textwidth]{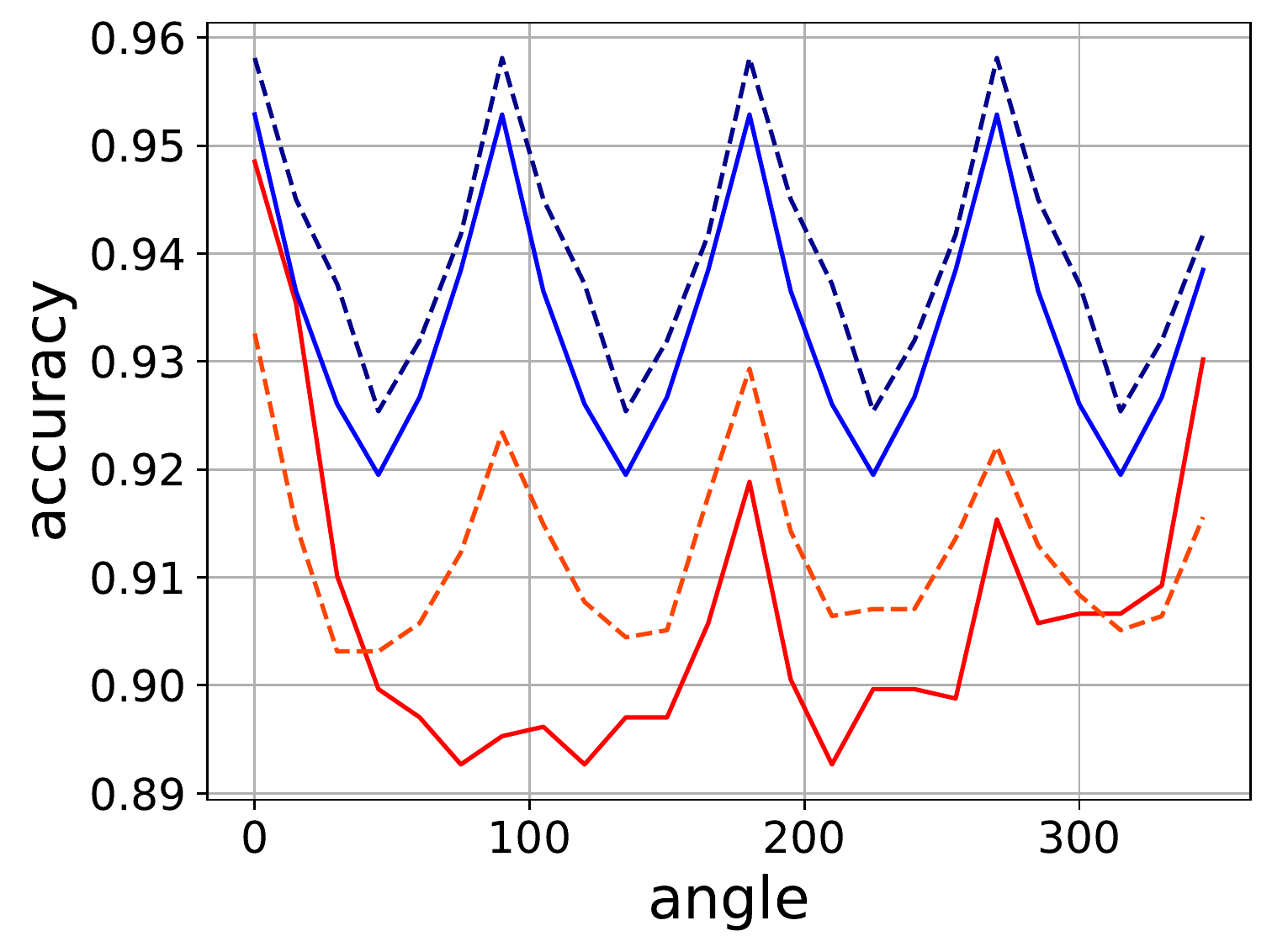}
    }
  }
\end{figure}

\end{document}